\documentclass[12pt, final]{l4dc2023}

\title[Uncertainty-Aware Implicit Safe Set Algorithm]{Probabilistic Safeguard for Reinforcement Learning\\Using Safety Index Guided Gaussian Process Models}
\usepackage{times}
\usepackage{cleveref}
\usepackage{algorithm}
\usepackage{comment}
\usepackage[noend]{algpseudocode}
\usepackage{booktabs}
\usepackage{multirow}
\usepackage{bm}
\usepackage{bbm}
\usepackage{bbding}
\usepackage{wrapfig}
\usepackage[font = footnotesize]{caption}
\newtheorem{asm}{Assumption}

\Crefname{asm}{Assumption}{Assumption}
\Crefname{definition}{Definition}{Definition}
\Crefname{lemma}{Lemma}{Lemma}

\DeclareMathOperator*{\argmax}{arg\,max}
\DeclareMathOperator*{\argmin}{arg\,min}
\DeclareMathOperator{\Tr}{Tr}



\author{\Name{Weiye Zhao$^1$} \Email{weiyezha@andrew.cmu.edu} \\
 \Name{Tairan He$^1$} \Email{tairanh@andrew.cmu.edu}\\
 \Name{Changliu Liu} \Email{cliu6@andrew.cmu.edu}\\
 \addr Robotics Institute, Carnegie Mellon University, Pittsburgh, PA 15213 USA 
  \thanks{This material is based upon work supported by the National Science Foundation under Grant No. 2144489.}
 }



\begin{document}

\maketitle
\def\thefootnote{1}\footnotetext{These authors contributed equally to this work.}\def\thefootnote{\arabic{footnote}}

\begin{abstract}%
Safety is one of the biggest concerns to applying reinforcement learning (RL) to the physical world. In its core part, it is challenging to ensure RL agents persistently satisfy a hard state constraint without white-box or black-box dynamics models. This paper presents an integrated model learning and safe control framework to safeguard any RL agent, where the environment dynamics are learned as Gaussian processes. The proposed theory provides (i) a novel method to construct an offline dataset for model learning that best achieves safety requirements; (ii) a design rule to construct the safety index to ensure the existence of safe control under control limits; (iii) a probablistic safety guarantee (i.e. probabilistic forward invariance) when the model is learned using the aforementioned dataset. Simulation results show that our framework achieves almost zero safety violation on various continuous control tasks.
\end{abstract}

\begin{keywords}%
  Safe control, Gaussian process, Dynamics learning%
\end{keywords}

\section{Introduction}
While reinforcement learning (RL) has achieved impressive results in games like Atari~\citep{zhao2019stochastic}, Go~\citep{silver2017masteringgo} and Starcraft~\citep{vinyals2019grandmaster}, the lack of safety guarantee limits the application of RL algorithms on real-world physical systems such as robotics~\citep{wei2022persistently}. In its core part, it is critical to ensure that RL agents persistently satisfy a hard state constraint defined by a \textit{safe set} (e.g., a set of non-colliding states) in many robotic applications~\citep{zhao2021issa, zhao2020contact,zhao2020experimental}. 
Though various constrained RL algorithms~\citep{he2023autocost,achiam2017cpo,wachi2018safe,yang2021wcsac,zhao2023state} have been introduced, the trial-and-error mechanism of these methods makes it hard to avoid safety violations during policy learning.

On the other hand, when the dynamics model of the system is accessible, energy-function-based safe control methods can achieve the safety guarantee, i.e., persistently satisfying the hard state constraint. These methods~\citep{noren2021safe,zhao2022safety,liu2014control,gracia2013reactive,he2023hierarchical} first synthesize an energy function such that the safe states have low energy, and then design a control law to satisfy the safe action constraints, i.e., to dissipate energy. Then these methods ensure \textit{forward invariance} inside the safe set 
. However, their limitation is that they exploit either white-box dynamics models (e.g., analytic form) ~\citep{khatib1986real,ames2014control,liu2014control,gracia2013reactive} or black-box dynamics models (e.g., digital twin simulators)~\citep{zhao2021issa}, while these models are not easy to build in complex environments. Other related works are summarized in \Cref{sec: related work}.

Practically, compared to dynamics models (i.e., a full mapping from the current state and control to the next state), it is easier to obtain samples of the dynamic transitions in real-world applications \citep{huang2018apolloscape,caesar2020nuscenes,cheng2019human,sun2023hybrid}. This paper investigates approaches to utilize these transition samples to achieve safety guarantees under the energy-function-based safe control framework, while relaxing the requirements of white-box or black-box dynamics models. In our methods, we leverage Gaussian Process (GP) to learn a statistical dynamics model due to (i) GP's reliable estimate of uncertainty~\citep{williams2006gaussian}; (ii) its well-established theory on uniform error bounds~\citep{srinivas2009gaussian,srinivas2012information,chowdhury2017kernelized,kanagawa2018gaussian,lederer2019uniform}. Instead of performing online model learning using online data, our dynamics model is learned based on an offline constructed dataset. When the dataset is constructed offline, we have the full control over the data distribution, which could result in (i) reliable convergence in model learning and (ii) good safety guarantees. 

\begin{figure}
    \centering
    \includegraphics[width=0.7\columnwidth]{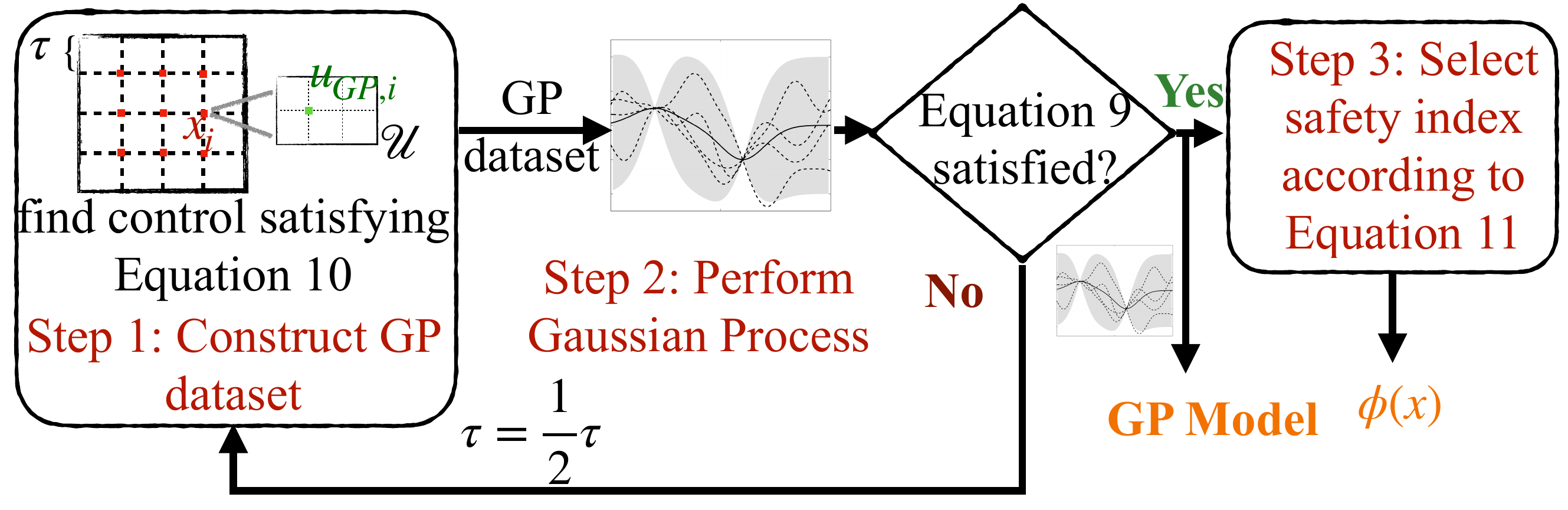}
    \caption{The flow chart that illustrate the offline state of UAISSA. We first select a proper state space discretization step size, construct the offline dynamics learning dataset for GP and design parameters of the safety index.}
    \label{fig:uaissa_offline}
    \vspace{-10pt}
\end{figure}

The main contribution of this paper is a theory to probabilistically safeguard robot policy learning using energy-function-based safe control with a GP dynamics model learned on an offline dataset.  
The overall pipeline of our method is shown in \Cref{fig:uaissa_offline}.
To achieve our goal of safeguarding RL agents with GP dynamics model, we first show how to construct the dataset for model learning and how to design the associated energy function (called \textit{safety index}) so that there always exists a feasible safe control under control limits. 
Secondly, 
we show how to design a safeguard for arbitrary RL agents to guarantee forward invariance during policy learning.
The method is evaluated on various challenging continuous control problems where the RL agents achieve almost zero constraint violation during policy learning. Additional results and discussions can be found in the appendix of the arxiv version \url{https://arxiv.org/abs/2210.01041}.

\section{Problem Background}
\subsection{Notations}
\paragraph{Dynamics}
 Denote $x_t \in \mathcal{X}\subset \mathbb{R}^{n_x}$ as the robot state at time step $t$; $u_t \in \mathcal{U}\subset \mathbb{R}^{n_u}$ as the control input to the robot at time step $t$, and the control space $\mathcal{U}$ is bounded. And denote $\mathcal{W}:=\mathcal{X}\times\mathcal{U}$, which is assumed to be compact. The system dynamics are defined as: 
\begin{equation}\label{eq:dynamics fn}
\begin{split}
    &x_{t+1} = f(x_t,u_t), \\
\end{split}
\end{equation}
where $f: \mathcal{W} \rightarrow \mathcal{X}$ is a function that maps the current robot state and control to the robot state in the next time step, and $f( \cdot )$ is $L_f$ Lipschitz continuous with respect to the 1-norm. 
For simplicity, this paper considers deterministic dynamics (but is unknown).

\paragraph{Safety Specification}
The safety specification requires that the system state should be constrained in a closed subset in the state space, called the safe set $\mathcal{X}_S$. The safe set can be represented by the zero-sublevel set of a continuous and piecewise smooth function $\phi_0:\mathbb{R}^{n_x} \rightarrow \mathbb{R}$, i.e., $\mathcal{X}_S = \{x\mid \phi_0(x)\leq 0\}$. $\mathcal{X}_S$ and $\phi_0$ are directly specified by users. 

\subsection{Preliminary}
\paragraph{Gaussian Process}
A Gaussian process (GP)~\citep{williams2006gaussian} is a nonparametric regression method specified by its mean $\mu_g(z) = \mathbb{E}[g(z)]$ and covariance (kernel) functions $k(z,z')=\mathbb{E}[(g(z)-\mu(z))(g(z') - \mu(z'))]$. 
Given $N$ finite measurements \\$y_N = [y(z_1), y(z_2), \cdots, y(z_N)]^T$ of the unknown function $g: \mathbb{R}^D \to \mathbb{R}$ subject to independent Gaussian noise $v \sim \mathcal{N}(0, \sigma_{\text{noise}}^2)$, the posterior mean $\mu(z_*)$ and variance $\sigma^2(z_*)$ are calculated as:
\begin{align}
    \label{eq:gp_mu_sigma}
    & \mu(z_*) = k^T_*(z_*)(K+ \sigma_{\text{noise}}^2 I_N)^{-1}y_N \\
    & \sigma^2(z_*) = k(z_*, z_*) - k^T_*(z_*)(K + \sigma_{\text{noise}}^2 I_N)^{-1}k_*(z_*) \label{eq: sigma gp},
\end{align}
where $K_{i,j} = k(z_i, z_j)$ and $k_*(z_*) = [k(z_1, z_*), k(z_2, z_*), \cdots, k(z_N, z_*)]^T$. In the following discussions, we assume observation is noise-free, i.e. $\sigma_{\text{noise}} = 0$. Note that noise reduction methods can be applied to eliminate $\sigma_{\text{noise}}$ in practice~\citep{kostelich1993noise}. For most commonly used kernel functions, GP can approximate any continuous function on any compact subset of $\mathcal{Z}$~\citep{srinivas2012information}. In this paper, the dynamics are modeled using GP with the following definition.

\begin{definition}[GP Dynamics Model]
\label{definition:gaussian_process}
The dynamics model of $f$ in \eqref{eq:dynamics fn} is represented as a zero mean Gaussian process
with
a continuous covariance kernel $k(\cdot, \cdot)$ with Lipschitz constant $L_k$ on the compact set $\mathcal{W}$, where $L_k$ can be caluclated analytically for commonly-used kernels~\citep{lederer2019uniform}. The posterior mean function and covariance matrix function of the GP model are denoted as $\mu_f(\cdot)$ and $\Sigma_f(\cdot)$, respectively. 
\end{definition}


\paragraph{Safety Index} To ensure system safety, 
all visited states should be inside $\mathcal{X}_S$
. However, $\mathcal{X}_S$ 
may contain states that will inevitably go to the unsafe set no matter what control inputs we choose. Hence, we need to assign high energy values to those inevitably unsafe states, and ensure \textbf{forward invariance} in a subset of the safe set $\mathcal{X}_S$. Safe Set Algorithm (SSA)~\citep{liu2014control}
synthesizes the energy function as a continuous, piece-wise smooth scalar function
$\phi:\mathbb{R}^{n_x} \rightarrow \mathbb{R}$, named, safety index. And we denote its 0-sublevel set as $\mathcal{X}_S^D:= \{x|\phi(x) \leq 0\}$. The general form of the safety index was proposed as $\phi = \phi_0^* + k_1\dot\phi_0 + \cdots + k_n\phi_0^{(n)}$ where (i) the roots of $1+k_1s+\ldots + k_n s^n=0$ are all negative real (to ensure zero-overshooting of the original safety constraints); (ii) the relative degree from $\phi_0^{(n)}$ to $u$ is one (to avoid singularity); and (iii) $\phi_0^*$ defines the same zero sublevel set as $\phi_0$ (to nonlinear shape the gradient of $\phi$ at the boundary of the safe set). It is shown in \citep{liu2014control} that choosing a control that decreases $\phi$ whenever $\phi$ is greater than or equal to $0$ can ensure forward invariance inside $\mathcal{X}_S\cap\mathcal{X}_S^D$. 


\subsection{Problem Formulation}
\label{sec: prob form}

The core problem of this paper is to safeguard a nominal controller (i.e., an RL agent) such that all visited states are inside $\mathcal{X}_S$. In this paper,  we are specifically interested in \textit{degree two systems} (i.e., the relative degree from $\dot\phi_0$ to $u$ is one), and the safety specification is defined as ${\phi_0} = d_{min} - d$ where $d$ denotes the safety status of the system, and the system becomes more unsafe when $d$ decreases. For example, for collision avoidance, $d$ can be designed to measure the relative distance between the robot and obstacles, which needs to be greater than some threshold $d_{min}$. Following the rules in~\citep{liu2014control}, we parameterize the safety index as 
$\phi = \sigma + d_{min}^n - d^n - k\dot d$,
and $\sigma, n, k > 0$ are tunable parameters of the safety index. It is easy to verify that this design satisfies the three requirements discussed above. 

The nominal control is an RL controller which aims to maximize cumulative discounted rewards in 
an infinite-horizon deterministic Markov decision process
(MDP). An MDP is specified by a tuple $(\mathcal{X}, \mathcal{U}, \gamma, r, f)$, where $r: \mathcal{X} \times \mathcal{U} \rightarrow \mathbb{R}$ is the reward function, $ 0 \leq \gamma < 1$ is the discount factor, and $f$ is the deterministic system dynamics defined in \eqref{eq:dynamics fn}, and we can access data samples of $f$.
We then define the 
set of safe control as $\mathcal{U}_S^D(x):=\{u\in \mathcal{U}\mid \phi(f(x, u)) \leq \max\{\phi(x)-\eta, 0\} \}$, where $\eta$ is a positive constant. Hence, the nominal controller can be safeguarded by projecting the nominal control $u_t^r$ to $\mathcal{U}_S^D(x)$ by solving the following optimization: 
\begin{equation}\label{eq:safeguard_optimization}
\begin{split}
    &\min_{u_t\in\mathcal{U}} \| u_t - u^r_t\|^2\\
    & \text{s.t. } \phi(f(x_t, u_t)) \leq \max\{\phi(x_t)-\eta, 0\}.
\end{split}
\end{equation}

Since $f$ is unknown, we need to first learn a statistical model of $f$ and then solve \eqref{eq:safeguard_optimization}. The well-established theories on uniform error bounds~\citep{srinivas2009gaussian,srinivas2012information,chowdhury2017kernelized,kanagawa2018gaussian,lederer2019uniform} for GP allows us to a build a reliable statistical model for a given dataset.
\begin{lemma}[Well-Calibrated Model]
For a dataset and $\delta \in (0,1)$, there exists $\beta_f(\delta)$ that we can learn a GP model $\big\{\mu_f(x, u), \sigma_f(x,u)\big\}$ that satisfies: $\forall x \in \mathcal{X}, u \in \mathcal{U}, P\Big(||f(x, u) - \mu_f(x, u)||_1 \leq \beta_f \sigma_f(x, u)\Big) \geq 1 - \delta$, where $\beta_f$ means $\beta_f(\delta)$ for simplicity and $\sigma_f(x,u) = \Tr(\Sigma_f^{\frac{1}{2}}(x,u))$.
\label{lem:Well-calibrated}
\end{lemma}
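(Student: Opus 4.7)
The plan is to reduce the vector-valued statement to the scalar GP confidence bounds already established in the cited literature (Srinivas et al., Chowdhury and Gopalan, Lederer et al.) and then aggregate across output coordinates via a union bound together with the coordinate-wise decomposition of the 1-norm.

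First, I would model the unknown dynamics $f:\mathcal{W}\rightarrow\mathbb{R}^{n_x}$ coordinate-wise by $n_x$ independent GPs, so that $\Sigma_f(x,u)$ is diagonal with entries $\sigma_i^2(x,u)$ and $\mu_f(x,u)=(\mu_1(x,u),\ldots,\mu_{n_x}(x,u))^\top$. Under this convention, $\Sigma_f^{1/2}(x,u)$ is also diagonal with entries $\sigma_i(x,u)$, and hence $\sigma_f(x,u)=\Tr(\Sigma_f^{1/2}(x,u))=\sum_{i=1}^{n_x}\sigma_i(x,u)$, which matches the object used in the lemma's right-hand side. I would then invoke a scalar uniform confidence result on each coordinate: for each $i$ and each $\delta_i\in(0,1)$, there exists $\beta_i(\delta_i)$ (depending on the RKHS norm of $f_i$ or the maximum information gain, depending on the variant adopted) such that $P(|f_i(x,u)-\mu_i(x,u)|\leq\beta_i(\delta_i)\sigma_i(x,u) \text{ for all }(x,u)\in\mathcal{W})\geq 1-\delta_i$. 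The kernel Lipschitz constant $L_k$ from \Cref{definition:gaussian_process} and compactness of $\mathcal{W}$ are exactly the ingredients that promote a pointwise statement into this uniform statement (via a covering-net argument as in Lederer et al.).

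Next, I would set $\delta_i=\delta/n_x$ and define $\beta_f(\delta):=\max_i\beta_i(\delta/n_x)$. Applying the union bound over the $n_x$ coordinates gives, with probability at least $1-\delta$ simultaneously for all $(x,u)\in\mathcal{W}$,
\begin{equation*}
\|f(x,u)-\mu_f(x,u)\|_1=\sum_{i=1}^{n_x}|f_i(x,u)-\mu_i(x,u)|\leq\beta_f(\delta)\sum_{i=1}^{n_x}\sigma_i(x,u)=\beta_f(\delta)\,\sigma_f(x,u),
\end{equation*}
which is precisely the claimed inequality; the pointwise probabilistic statement in the lemma is then an immediate corollary of this uniform one.

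The main obstacle is the quantitative choice of $\beta_f(\delta)$ together with the noise-free assumption. The Srinivas-type bound is stated with Gaussian observation noise and depends on the maximum information gain $\gamma_N$, so one must either (i) pass to the small-noise limit carefully to retain a meaningful bound, or (ii) switch to the interpolation-style argument of Lederer et al., which is explicitly compatible with $\sigma_{\text{noise}}=0$ but requires an a priori RKHS norm bound on each $f_i$. Reconciling these two routes, and making sure that the same constant $\beta_f(\delta)$ can be used uniformly across coordinates, is the only non-routine part; once a single coordinate-wise bound is fixed, the union-bound aggregation and the $\Tr(\Sigma_f^{1/2})$ identification are mechanical.
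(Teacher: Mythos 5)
Your proposal is correct and follows the same basic route the paper takes: the paper states this lemma without a self-contained proof, deferring the scalar uniform error bound to the cited literature (in particular Lemma~\ref{lem:uniform_error_bound_with_guarantee}, i.e.\ Theorem~3.3 of Lederer et al., reproduced in \Cref{sec: munich}) and then aggregating output coordinates so that $\sigma_f=\sum_i\sigma_i=\Tr(\Sigma_f^{1/2})$, exactly as you do. The one structural difference is in how the vector-valued output is handled: you run $n_x$ independent coordinate GPs and pay a union bound with $\delta_i=\delta/n_x$, taking $\beta_f=\max_i\beta_i(\delta/n_x)$, whereas the paper (Remark~\ref{multidimensional_GP}, following Berkenkamp et al.) folds the output index into the input, regressing a single scalar function $g'(z,i)$ on $\mathcal{Z}\times\{1,\dots,n_x\}$ so that one application of the uniform bound covers all coordinates simultaneously without splitting $\delta$; your route costs an extra $\log n_x$ inflation of $\beta_f$ but avoids any assumption about a shared kernel across output dimensions, while the paper's route keeps a single $\beta$ at the price of treating the index as a kernel input. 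Your closing caveat is also well placed: the Lederer-type bound actually reads $\sqrt{\beta(\tau)}\sigma_g(z)+\gamma(\tau)$ with an additive slack term, and the paper likewise only argues that $\gamma(\tau)$ can be made negligible rather than absorbing it rigorously into $\beta_f\sigma_f$, so neither your argument nor the paper's fully closes that gap.
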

This lemma ensures that the confidence intervals of GP prediction cover the true dynamics function with high probability given an appropriate constant $\beta_f$. The expressions of $\beta_f$ are discussed in~\citep{srinivas2009gaussian,srinivas2012information,chowdhury2017kernelized,kanagawa2018gaussian,lederer2019uniform}.


\paragraph{Challenges}
The challenges for solving \eqref{eq:safeguard_optimization} can be divided into two parts: (1) \textbf{offline synthesis stage}: how to generate a data set for model learning and safety index synthesis such that: (a) there is always a solution for \eqref{eq:safeguard_optimization} with the learnt dynamics under control limit; (b) safety is preserved under model mismatch.
(2) \textbf{online computation stage}: how to efficiently solve \eqref{eq:safeguard_optimization} with learnt dynamics to find safe controls.

\section{Offline Safety Index Synthesis}
In this section, we introduce the theoretical results to tackle the aforementioned offline synthesis stage challenges. We first show that the deterministic constraint \eqref{eq:safeguard_optimization} can be verified via introducing an upper bound of safety index. Next, we introduce a theory that verifies the feasibility of  the probabilistic constraint for all possible states (which is uncountably many) by verifying the feasibility of a similar problem for finitely many states (Proposition \ref{proposition:safe_control_discretization}). Lastly, we discuss the criteria of dataset construction for model learning and the associated safety index design rule which ensure nonempty set of safe control for all possible system states (\Cref{theo: nonempty set of safe control}). 

\subsection{Preserving Safety with Learnt Model}
As mentioned in \Cref{sec: prob form}, our ultimate goal is to solve \eqref{eq:safeguard_optimization}, whereas it is intractable to directly solve \eqref{eq:safeguard_optimization} since $f$ is unknown. On the other hand, we can learn a reliable statistical model of $f$ via GP, i.e. $\big\{\mu_f(x,u), \sigma_f(x,u), \beta_f\big\}$ as stated in Lemma \ref{lem:Well-calibrated}. Hence, as long as we can find a probabilistic upper bound of safety index $\phi(f(x,u))$, denoted as $\mathbf{U}_f(x,u)$ such that $\mathbf{U}_f(x,u) \geq \phi(f(x,u))$, the deterministic condition of \eqref{eq:safeguard_optimization} can be verified through a stricter condition, i.e.
\begin{align}
\label{eq: upper condition}
    \mathbf{U}_f(x, u) < \max(\phi(x)-\eta, 0).
\end{align}


In Lemma \ref{lem: one step prediction}, we derive the probabilistic upper bound of safety index as
\begin{align}
    \mathbf{U}_f(x,u) := \phi(\mu_f(x, u)) + L_\phi \beta_f \sigma_f(x, u),
\end{align}
where $L_\phi$ is the Lipschitz constant of $\phi(\cdot)$ with respect to 1-norm. Lemma \ref{lem: one step prediction} shows that $\phi(f(x, u))$ is smaller than $\mathbf{U}_f(x,u)$ with probability at least ($1- \delta$). The proof of this probabilistic upper bound is given in \Cref{sec: one step prediction}.

\paragraph{Nonempty Set of Safe Control}
By introducing $\mathbf{U}_f(x,u)$, we have addressed the challenge (1.b). In the following two subsections, we will address challenge (1.a) by ensuring the existence of nonempty set of safe control for all possible states under control limit when solving \eqref{eq: upper condition}, i.e. 
\begin{align}
\label{eq: fundamental upper bound existence}
    \forall x \in \mathcal{X}, \exists u \in \mathcal{U}, \text{ s.t. }  \mathbf{U}_f(x, u) < \max(\phi(x)-\eta, 0).
\end{align}


\subsection{Infinite to Finite Conditions}
Notice that verifying condition \eqref{eq: fundamental upper bound existence} on the continuous state space is still intractable.
Therefore, we consider a discretization of the state space defined as follows.
\begin{definition}
[Discretization]
\label{definition:discretization}
A $\tau$-discretization $\mathcal{H}_\tau$ of a set $\mathcal{H}$ is defined as $\mathcal{H}_\tau := \{h_{1}, h_{2}, \ldots\}$ such that $\forall h \in \mathcal{H}, \exists h_i \in \mathcal{H}_\tau \text{ s.t. } ||h_i - h||_1 \leq \tau$.
\end{definition}
\begin{definition}[Data]
\label{Data}
A dataset on a state space $\tau$-discretization $\mathcal{X}_\tau$ is a collection of transition samples defined as $\mathcal{D}_\tau:=\{\big( x_{i}, u_{i} , 
f(x_{i}, u_{i})
\big)\}_{i=1}^{|\mathcal{X}_\tau|}$ where $x_i\in \mathcal{X}_\tau$.
\end{definition}
Given this discretization, if we ensure the existence of safe control for states in $\mathcal{X}_\tau$, together with the Lipschitz continuity and the bound on posterior variance of statistical models, then we can ensure the existence of safe control on the continuous state space $\mathcal{X}$.
\begin{proposition}[Equivalence in Feasibility Conditions]
\label{proposition:safe_control_discretization}
With the GP defined in Definition \ref{definition:gaussian_process}, the state-space $\tau_x$-discretization $\mathcal{X}_{\tau_x}$ defined in Definition \ref{definition:discretization} and the dataset $\mathcal{D}_{\tau_x}$ defined in Definition \ref{Data}, if the following condition holds: 
\begin{align}
\label{eq: stricter condition}
    \forall (x_{i}, u_i),
    \mathbf{U}_f(x_{i}, u_i) <&  \max\{\phi(x_{i})-\eta, 0\} - L_\phi L_f \tau_x - L_\phi \tau_x - 2L_\phi\beta_f\tilde{\sigma}_f~,
\end{align}
where 
\begin{align*}
    \tilde{\sigma}_f = n_x \sqrt{2L_k \tau_x +  2|\mathcal{X}_{\tau_x}| L_k \tau_x \|K^{-1} \| \max_{w, w' \in \mathcal{W}}k(w, w')},
\end{align*}
then it holds with probability $1-\delta$ that
\begin{equation*}\label{eq:safe_control_discretization}
\begin{split}
    \forall x \in \mathcal{X}, \exists u \in \mathcal{U}, \text{ s.t. } \mathbf{U}_f(x, u) < \max(\phi(x)-\eta, 0).\\
\end{split}
\end{equation*}
\end{proposition}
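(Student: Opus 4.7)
The plan is to exploit the $\tau_x$-discretization to lift the finite condition at data points to a uniform condition on the continuous state space by a ``nearest-neighbor'' argument: for any continuous $x$, pick the dataset point $x_i$ within distance $\tau_x$ and use the very $u_i$ that was verified at $x_i$ as a candidate safe control at $x$. Each slack term on the right-hand side of \eqref{eq: stricter condition} is introduced precisely to absorb one source of mismatch between $x$ and $x_i$. I would work under the high-probability event of Lemma \ref{lem:Well-calibrated} throughout, so that the conclusion will hold with probability $1-\delta$ directly.

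First I would unfold $\mathbf{U}_f(x,u_i) = \phi(\mu_f(x,u_i)) + L_\phi \beta_f \sigma_f(x,u_i)$ and compare to $\mathbf{U}_f(x_i,u_i)$. Since $(x_i,u_i,f(x_i,u_i))$ lies in $\mathcal{D}_{\tau_x}$ and observations are noise-free, $\sigma_f(x_i,u_i)=0$ and $\mu_f(x_i,u_i)=f(x_i,u_i)$, so $\mathbf{U}_f(x_i,u_i) = \phi(f(x_i,u_i))$. Using $\|f(x,u_i)-\mu_f(x,u_i)\|_1 \leq \beta_f \sigma_f(x,u_i)$ from Lemma \ref{lem:Well-calibrated} and $L_\phi$-Lipschitz continuity of $\phi$,
\begin{align*}
\phi(\mu_f(x,u_i)) \;\leq\; \phi(f(x,u_i)) + L_\phi \beta_f \sigma_f(x,u_i),
\end{align*}
and combining with $\|f(x,u_i)-f(x_i,u_i)\|_1 \leq L_f \tau_x$ gives
\begin{align*}
\mathbf{U}_f(x,u_i) \;\leq\; \mathbf{U}_f(x_i,u_i) + L_\phi L_f \tau_x + 2 L_\phi \beta_f \sigma_f(x,u_i).
\end{align*}
This explains the appearance of the $L_\phi L_f \tau_x$ and the $2L_\phi\beta_f(\cdot)$ slack terms on the right-hand side of \eqref{eq: stricter condition}.

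Next I would dominate the pointwise variance $\sigma_f(x,u_i)$ by the uniform constant $\tilde{\sigma}_f$. Because $(x,u_i)$ differs from the data point $(x_i,u_i)$ only in the state coordinate and $\|x-x_i\|_1 \leq \tau_x$, a Lipschitz-in-kernel argument in the spirit of \citep{lederer2019uniform} gives $\sigma_f^2(x,u_i) \leq \sigma_f^2(x_i,u_i) + 2 L_k \tau_x + 2|\mathcal{X}_{\tau_x}| L_k \tau_x \|K^{-1}\| \max_{w,w'} k(w,w')$, and since $\sigma_f(x_i,u_i)=0$ and the outer factor of $n_x$ accommodates the trace-then-root operation $\sigma_f = \Tr(\Sigma_f^{1/2})$, we obtain $\sigma_f(x,u_i) \leq \tilde{\sigma}_f$; this step, which is the main technical obstacle, is essentially a re-derivation of the uniform GP variance bound adapted to the $1$-norm trace convention used in the paper.

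Finally I would convert $\max(\phi(x_i)-\eta,0)$ to $\max(\phi(x)-\eta,0)$ via a short case split: since $\phi$ is $L_\phi$-Lipschitz, $\phi(x_i) \leq \phi(x) + L_\phi \tau_x$, and a two-case analysis (depending on the sign of $\phi(x)-\eta$) shows $\max(\phi(x_i)-\eta,0) \leq \max(\phi(x)-\eta,0) + L_\phi \tau_x$, which explains the $L_\phi \tau_x$ slack term. Substituting the stricter hypothesis \eqref{eq: stricter condition} and the bounds above then yields
\begin{align*}
\mathbf{U}_f(x,u_i) \;<\; \max(\phi(x)-\eta,0),
\end{align*}
so that $u_i$ witnesses the existence of a safe control at $x$. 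Since $x \in \mathcal{X}$ was arbitrary and the only probabilistic step invoked was Lemma \ref{lem:Well-calibrated}, the conclusion holds with probability at least $1-\delta$.
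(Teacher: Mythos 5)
Your proposal is correct and follows essentially the same route as the paper's proof: the same nearest-neighbor choice of the dataset control $u_i$ as witness, the same decomposition of $\mathbf{U}_f(x,u_i)-\mathbf{U}_f(x_i,u_i)$ into the GP calibration error, the dynamics Lipschitz term, and the uniform variance bound $\sigma_f(x,u_i)\leq\tilde{\sigma}_f$ (which the paper isolates as Lemmas \ref{lem:bound_posterior_variance} and \ref{lem:bound_posterior_variance_multidimensional}), and the same Lipschitz shift of $\max\{\phi(x_i)-\eta,0\}$ accounting for the $L_\phi\tau_x$ slack. The only cosmetic difference is that you use the exact identity $\mathbf{U}_f(x_i,u_i)=\phi(f(x_i,u_i))$ at data points where the paper only needs the one-sided inequality.
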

The proof of Proposition \ref{proposition:safe_control_discretization} is given in \Cref{sec: proof prop nonempty set of safe control}. Proposition \ref{proposition:safe_control_discretization} states that, in order to provide guarantee on the nonempty set of safe control in the whole continuous state space $\mathcal{X}$, it is sufficient to check a stricter condition (i.e., \eqref{eq: stricter condition}) of nonempty set of safe control on the discretized state set $\mathcal{X}_{\tau_x}$.
Note that the additional bounds on discretized states $\mathcal{X}_{\tau_x}$ (i.e. $L_\phi L_f \tau_x$, $L_\phi \tau_x$, $2L_\phi\beta_f\tilde{\sigma}_f$) of \eqref{eq: stricter condition} become zero as the discretization constant $\tau$ goes to zero.

\subsection{Dynamics Learning and Safety Index Design Theory}

\paragraph{Synthesize Safe Index} 
So far, we have shown that  \eqref{eq: stricter condition} implies
\eqref{eq: fundamental upper bound existence} in a probabilistic way.
Therefore, a theory that quantifies how to parameterize $\phi$ to make \eqref{eq: stricter condition} hold is needed.
To begin with, we first need to ensure there exists such a safety index to make \eqref{eq: stricter condition} hold. Hence, an assumption is made:
\begin{asm}[Safe Control]
\label{asm:safe_control}
The state space is bounded, and the infimum of the supremum of $\Delta \dot d$ can achieve positive, i.e., $\inf_x\sup_u\Delta \dot d(x,u) > 0$.
\end{asm}

Here $\Delta \dot d$ denotes the change of $\dot d$ at one time step. The necessity of \Cref{asm:safe_control} is summarized in \Cref{sec: necessity of assumption 3}. Essentially, \Cref{asm:safe_control} enables a degree two system to dissipate energy (i.e., $\ddot{\phi} < 0$) at all states. Subsequently, the safety index design rule is summarized as follows:

\begin{theorem}[Feasibility of Safety Index Design]
\label{theorem:main}
Denote $d(\cdot)$ and $\dot{d}(\cdot)$ as the mappings from $x$ to $d$ and $\dot{d}$ with Lipschitz constant $L_{d_x}$ and ${L}_{\dot{d}_x}$ with respect to 1-norm.
Under \Cref{asm:safe_control}, if we (1) select a state-space $\tau_x$-discretization $\mathcal{X}_{\tau_x}$ with step size such that 
\begin{equation}
\label{eq: the choice of tau}
    \resizebox{0.9\hsize}{!}{
    $\tau_x \leq \min \Bigg\{1, \bigg[\frac{\inf_{x}\sup_{u}\Delta\dot{d}(x,u)}{2(L_{d_x} + L_{\dot{d}_x})\big(1 + L_f + 2\beta_fn_x\sqrt{2L_k}\sqrt{1 +  |\mathcal{X}_{\tau_x}| \|K^{-1} \| \max_{w, w' \in \mathcal{W}}k(w, w')}\big)}\bigg]^2\Bigg\}$
}
\end{equation}
(2) construct the corresponding dataset $\{\big( x_{i}, u_{GP,i} , 
f(x_{i}, u_{GP,i})
\big)\}_{i=1}^{|\mathcal{X}_{\tau_x}|}$ on $\mathcal{X}_{\tau_x}$ by selecting $u_{GP,i}$ such that for any $x_i\in\mathcal{X}_{\tau_x}$
\begin{equation}
\label{eq: the choice dataset}
\underbrace{\dot{d}(f(x_i, u_{GP,i}))}_{\dot d_{GP,i}} - \underbrace{\dot{d}(x_i)}_{\dot d_{i}} > \frac{\inf_{x}\sup_{u}\Delta\dot{d}(x,u)}{2},
\end{equation}
(3) choose the safety index parameters such that

\begin{equation}
\label{eq: the choice of safety index}
\begin{cases}
      \sigma = 0,\\
      n = 1,\\
      k >  \max_{x_i \in \mathcal{X}_{\tau_x}} \big\{ \max\big\{1,\Upsilon_i
      \big\} \big\}
    \end{cases}
\end{equation}
where we denote $d_{GP,i} = d(f(x_i, u_{GP,i}))$, $d_{i} = d(x_i)$, and
\begin{align*}
    \Upsilon_i = \frac{\eta + d_{i} - d_{GP,i}}{\dot d_{GP,i} - \dot d_{i} - (L_{d_x} + L_{\dot{d}_x})\big(\tau_x - L_f \tau_x -2\beta_f n_x \tilde{\sigma}_f\big)} \\ \nonumber 
    \tilde{\sigma}_f = n_x \sqrt{2L_k \tau_x +  2|\mathcal{X}_{\tau_x}| L_k \tau_x \|K^{-1} \| \max_{w, w' \in \mathcal{W}}k(w, w')},
\end{align*}
then there always exists a  safe control for any discretized state
\begin{align}
\label{eq: the discretized nonempty set of safe control}
&\forall x_i \in \mathcal{X}_\tau, \; \exists u\in \mathcal{U},  \text{ s.t. } \\ \nonumber 
&\mathbf{U}_f(f(x_i, u)) < \max\{\phi(x_i)-\eta, 0\} - L_\phi L_f \tau_x - L_\phi \tau_x - 2L_\phi \beta_f \tilde{\sigma}_f.    
\end{align}
\end{theorem}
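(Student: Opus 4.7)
The plan is to \emph{exhibit} a feasible control at each discretized state $x_i$ rather than searching over the whole action space. Specifically, I would take $u = u_{GP,i}$---the very control used when constructing the $i$th dataset entry. Because the triple $(x_i, u_{GP,i}, f(x_i, u_{GP,i}))$ is in the training set and the GP is noise-free, the posterior collapses at this input: $\mu_f(x_i, u_{GP,i}) = f(x_i, u_{GP,i})$ and $\sigma_f(x_i, u_{GP,i}) = 0$. Consequently, the probabilistic upper bound from Lemma~\ref{lem: one step prediction} reduces to its deterministic value
\begin{equation*}
\mathbf{U}_f(x_i, u_{GP,i}) = \phi(\mu_f(x_i, u_{GP,i})) + L_\phi\beta_f\,\sigma_f(x_i, u_{GP,i}) = \phi(f(x_i, u_{GP,i})),
\end{equation*}
eliminating the GP-prediction layer and reducing the target to purely algebraic manipulation of the safety index.

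Second, I would substitute the parameter choice $\sigma=0$, $n=1$, so that $\phi(x) = d_{min} - d(x) - k\dot d(x)$, and bound its state-space Lipschitz constant by $L_\phi \leq L_{d_x} + k L_{\dot d_x}$. Since $\max\{\phi(x_i)-\eta, 0\} \geq \phi(x_i) - \eta$, it is sufficient to prove the stronger (unmaxed) inequality $\phi(f(x_i, u_{GP,i})) < \phi(x_i) - \eta - L_\phi\bigl(L_f\tau_x + \tau_x + 2\beta_f\tilde\sigma_f\bigr)$. Plugging in the closed-form $\phi$, grouping $k$-terms, and using the Lipschitz bound on $L_\phi$ rearranges this into a condition linear in $k$, of the form ``$k\cdot(\mathrm{denom}) > (\mathrm{numer})$'', namely
\begin{equation*}
k\bigl[(\dot d_{GP,i} - \dot d_i) - L_{\dot d_x}(L_f\tau_x + \tau_x + 2\beta_f\tilde\sigma_f)\bigr] > \eta + (d_i - d_{GP,i}) + L_{d_x}(L_f\tau_x + \tau_x + 2\beta_f\tilde\sigma_f).
\end{equation*}

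Third, I would establish that the bracketed denominator is strictly positive. By the dataset construction \eqref{eq: the choice dataset}, $\dot d_{GP,i} - \dot d_i > \tfrac{1}{2}\inf_x\sup_u\Delta\dot d(x,u)$, which is itself strictly positive by Assumption~\ref{asm:safe_control}. The subtracted tail shrinks with $\tau_x$, since $\tilde\sigma_f$ scales like $\sqrt{\tau_x}$ up to factors depending on $|\mathcal{X}_{\tau_x}|$ and $\|K^{-1}\|$, and the step-size rule \eqref{eq: the choice of tau} is calibrated (via its squared form) so that the whole tail stays at most $\tfrac{1}{2}\inf\sup\Delta\dot d$. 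Hence the denominator is bounded below by a positive constant, and dividing through yields the clean condition $k > \Upsilon_i$, which is exactly what the design rule \eqref{eq: the choice of safety index} enforces for every $i$.

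The main obstacle I anticipate is the entanglement between $k$, $L_\phi$, and $\tau_x$: because $L_\phi$ grows linearly with $k$, the tail correction $L_\phi(\cdots)\tau_x$ also scales with $k$, so a naive ``just pick $k$ large'' argument would be circular. The remedy is to separate the $k$-dependent and $k$-independent parts of $L_\phi$ \emph{before} inverting the inequality to solve for $k$, as done in the rearrangement above, and to argue first that $\tau_x$ can be made small enough that the $k$-multiplied tail never catches up with $\dot d_{GP,i} - \dot d_i$. A secondary technicality is verifying that $\tilde\sigma_f \to 0$ as $\tau_x \to 0$: even though $|\mathcal{X}_{\tau_x}|$ grows polynomially in $1/\tau_x$, it enters $\tilde\sigma_f$ multiplied by $\tau_x$ inside a square root, so the $\sqrt{\tau_x}$ behaviour dominates and the implicit bound \eqref{eq: the choice of tau} is well-posed.
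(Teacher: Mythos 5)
Your proposal follows essentially the same route as the paper's proof: exhibit $u = u_{GP,i}$ so that the GP posterior collapses ($\sigma_f(x_i,u_{GP,i})=0$, $\mu_f = f$), drop the $\max\{\cdot,0\}$ to the unmaxed lower bound, reduce to an inequality linear in $k$, and use \eqref{eq: the choice of tau} together with \eqref{eq: the choice dataset} to show the $k$-coefficient is strictly positive before solving for $k$. The only deviation is cosmetic: the paper uses Lemma~\ref{lem: L phi}'s form $L_\phi = \max\{1,k\}(L_{d_x}+L_{\dot d_x})$ (which, with $k>1$, puts the entire discretization tail into the denominator and yields exactly the stated $\Upsilon_i$), whereas your split $L_\phi \le L_{d_x}+kL_{\dot d_x}$ produces an algebraically different threshold, so to recover the theorem's literal $\Upsilon_i$ you should adopt the paper's Lipschitz constant.
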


The proof for \Cref{theorem:main} is summarized in
\Cref{Proof of theorem:main}
. \Cref{theorem:main} states that, firstly, we select a proper discretization gap of state space such that it is small enough according to \eqref{eq: the choice of tau}. Secondly, we construct an offline dataset such that the selected control for each discretized state can increase $\dot d$ by a certain volume according to \eqref{eq: the choice dataset}. 
Lastly, by performing GP regression on the constructed dataset, 
the safety index designed according to \eqref{eq: the choice of safety index} ensures the existence of probabilistic safe control for all discretized states to satisfy \eqref{eq: the discretized nonempty set of safe control}. 
Note that \eqref{eq: the discretized nonempty set of safe control} is equivalent to \eqref{eq: stricter condition}, the following theorem is thus a direct consequence of  Proposition \ref{proposition:safe_control_discretization} and \Cref{theorem:main}.
\begin{theorem}
\label{theo: nonempty set of safe control}
Under the same assumptions of \Cref{theorem:main}, by selecting state discretization step size according to \eqref{eq: the choice of tau}, constructing Gaussian process dataset according to \eqref{eq: the choice dataset}, and defining safety index according to \eqref{eq: the choice of safety index}, then it holds with probability $1-\delta$ that
\begin{align}
\label{eq: real upper bound safety index}
    &\forall x \in \mathcal{X}, \; \exists u, \text{ s.t. } \\ \nonumber 
    &\phi(f(x, u)) \leq \mathbf{U}_f(x, u) < \max(\phi(x)-\eta, 0).
\end{align}
\end{theorem}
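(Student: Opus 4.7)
The plan is to chain together the two results already established in the section and observe that no new analysis is required. Specifically, under Assumption \ref{asm:safe_control} together with the three constructive choices \eqref{eq: the choice of tau}, \eqref{eq: the choice dataset}, and \eqref{eq: the choice of safety index}, \Cref{theorem:main} immediately gives \eqref{eq: the discretized nonempty set of safe control}: for every discretized state $x_i \in \mathcal{X}_{\tau_x}$ there exists some $u \in \mathcal{U}$ such that
\begin{equation*}
\mathbf{U}_f(x_i, u) < \max\{\phi(x_i) - \eta, 0\} - L_\phi L_f \tau_x - L_\phi \tau_x - 2 L_\phi \beta_f \tilde{\sigma}_f.
\end{equation*}
The key observation I would make is that this is literally the finite-discretization hypothesis \eqref{eq: stricter condition} of Proposition \ref{proposition:safe_control_discretization}.

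Having established that the hypothesis of Proposition \ref{proposition:safe_control_discretization} holds, I would then invoke it directly, concluding that with probability at least $1-\delta$,
\begin{equation*}
\forall x \in \mathcal{X},\ \exists u \in \mathcal{U},\quad \mathbf{U}_f(x,u) < \max\{\phi(x) - \eta, 0\}.
\end{equation*}
This handles the right-hand inequality of \eqref{eq: real upper bound safety index}. The left-hand inequality $\phi(f(x,u)) \leq \mathbf{U}_f(x,u)$ is exactly the probabilistic upper-bound result of Lemma \ref{lem: one step prediction}, which is in turn a consequence of the well-calibrated GP model in Lemma \ref{lem:Well-calibrated}. Concatenating the two inequalities yields \eqref{eq: real upper bound safety index}.

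The only subtlety worth being careful about, and the part I would expect to be the main technical obstacle, is the probabilistic bookkeeping. Both the upper-bound step (Lemma \ref{lem: one step prediction}) and Proposition \ref{proposition:safe_control_discretization} are stated with failure probability $\delta$, so naively one might be tempted to take a union bound and lose a factor of two. However, both statements derive their probabilistic content from the same well-calibrated event of Lemma \ref{lem:Well-calibrated}, namely $\|f(x,u) - \mu_f(x,u)\|_1 \leq \beta_f \sigma_f(x,u)$ holding uniformly over $\mathcal{W}$. On this single event, which has probability at least $1-\delta$, both conclusions hold deterministically, so the overall probability remains $1-\delta$ without any union bound. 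Making this explicit is the one place where the proof needs a careful sentence rather than a mere citation, after which \eqref{eq: real upper bound safety index} follows immediately.
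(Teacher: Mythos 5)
Your proposal is correct and matches the paper's own argument: the paper likewise observes that \eqref{eq: the discretized nonempty set of safe control} is exactly the hypothesis \eqref{eq: stricter condition} of Proposition \ref{proposition:safe_control_discretization}, so the theorem follows by chaining \Cref{theorem:main} with that proposition, with the left-hand inequality $\phi(f(x,u)) \leq \mathbf{U}_f(x,u)$ supplied by Lemma \ref{lem: one step prediction}. Your remark that both probabilistic steps condition on the same well-calibrated event of Lemma \ref{lem:Well-calibrated}, so no union bound is needed, is a point the paper leaves implicit but is consistent with its stated $1-\delta$ guarantee.
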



\begin{remark}
    It is worth noting that the system property $\inf_x\sup_u\Delta \dot d(x,u) > 0$ is crucial for establishing the nonempty set of safe control theorem as indicated in \eqref{eq: the choice of tau} and \eqref{eq: the choice dataset}. In practice, a lower bound of $\inf_x\sup_u\Delta \dot d(x,u)$ can be obtained via sampling the state space and control space, which is summarized in \Cref{sec: inf sup safety status}.
\end{remark}

\section{Uncertainty-Aware Implicit Safe Set Algorithm}

In the previous section, we established theoretical results for safety index design to ensure a nonempty set of safe control with learned dynamics models. However, due to the non-control-affine nature of the GP dynamics model and the limitations of conventional QP-based projection methods, we employ a multi-directional line search approach to solve the black-box optimization problem in \eqref{eq:safeguard_optimization}. In this section, we present a practical algorithm called Uncertainty-Aware Implicit Safe Set Algorithm (UAISSA) that builds upon the theoretical foundations discussed earlier and utilizes a sample-efficient black-box constrained optimization algorithm \citep{zhao2021issa}. The details of UAISSA can be found in \Cref{alg:uaissa} (see \Cref{sec: algo appdx}).
To have a better understanding on the \textit{Offline Stage} of UAISSA, we summarize the procedure for constructing a valid safety index and the associated GP dynamics model in \Cref{fig:uaissa_offline}. Firstly, 
we randomly select a step size $\tau$, and perform $\tau$-discretization of the state space.
For each discretized state $x_i$, we use sampling (grid sampling or random sampling) to find a control $u_{GP,i}$ satisfying \eqref{eq: the choice dataset}, which results in a dynamics learning dataset $\{\big( x_{i}, u_{GP,i} , 
f(x_{i}, u_{GP,i})
\big)\}_{i=1}^{|\mathcal{X}_{\tau_x}|}$. Next, we learn a GP dynamics model from the constructed dataset. Together with the Lipschitz constants and well-calibrated GP dynamics model, we can then evaluate \eqref{eq: the choice of tau}. If \eqref{eq: the choice of tau} does not hold, we will further shrink the discretization step size by half, and repeat the aforementioned procedures. If \eqref{eq: the choice of tau} holds, we will evaluate $\Upsilon_i$ for each $x_i$ from the dataset, and select the parameters for the safety index $\phi(x)$
according to \eqref{eq: the choice of safety index}.


With the guarantee of nonempty set of safe control provided by \Cref{theo: nonempty set of safe control}, and the fact that ISSA can always find a suboptimal solution of \eqref{eq:safeguard_optimization} with finite iterations if the set of safe control is non-empty [Proposition 2,~\citep{zhao2021issa}], the following theorem is thus a direct consequence of Theorem 1 from~\citep{zhao2021issa}.

\begin{theorem}[Forward Invariance]
\label{thoem:forward_invariance}
If the control system satisfies \Cref{asm:safe_control} and with the GP model and the safety index as specified in \Cref{theorem:main}, then if $\phi(x_t) \leq 0$, \Cref{alg:uaissa} guarantees $\phi(x_{t+1}) \leq 0$ with probability $1 - \delta$.
\end{theorem}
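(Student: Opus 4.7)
The plan is to chain three ingredients that have already been established: (i) Theorem~\ref{theo: nonempty set of safe control} on the non-empty set of safe controls, (ii) the convergence guarantee of ISSA from [Proposition 2,~\citep{zhao2021issa}] for black-box constrained optimization, and (iii) the definition of the probabilistic upper bound $\mathbf{U}_f$ together with Lemma~\ref{lem:Well-calibrated}. The proof should collapse to a short argument once the bookkeeping of the $(1-\delta)$ event is made explicit.

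First, I would invoke Theorem~\ref{theo: nonempty set of safe control} to assert that, under the stated choices of $\tau_x$, dataset $\mathcal{D}_{\tau_x}$, and safety index parameters, with probability at least $1-\delta$ the event
\[
\mathcal{E} := \Bigl\{\forall x\in\mathcal{X},\ \exists u\in\mathcal{U}:\ \phi(f(x,u))\le \mathbf{U}_f(x,u) < \max\{\phi(x)-\eta,0\}\Bigr\}
\]
holds. A key observation is that $\mathcal{E}$ is a property of the GP posterior $\{\mu_f,\sigma_f\}$, which is determined by the offline dataset; hence the $1-\delta$ budget is spent once, not per time step, and no union bound over $t$ is needed.

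Next, I would condition on $\mathcal{E}$ and analyze the action chosen by UAISSA at $x_t$. On $\mathcal{E}$ the feasible set $\{u\in\mathcal{U} : \mathbf{U}_f(x_t,u) < \max\{\phi(x_t)-\eta,0\}\}$ is non-empty, so the multi-directional line search used inside UAISSA, which is precisely the ISSA sampler, terminates in finitely many iterations with a control $u_t$ that meets this stricter constraint by [Proposition 2,~\citep{zhao2021issa}]. Now I would use the hypothesis $\phi(x_t)\le 0$: since $\eta>0$, we have $\max\{\phi(x_t)-\eta,0\}=0$, so the returned $u_t$ satisfies $\mathbf{U}_f(x_t,u_t)<0$. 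Chaining with the probabilistic upper bound relation $\phi(f(x_t,u_t))\le \mathbf{U}_f(x_t,u_t)$ (which itself is part of event $\mathcal{E}$) yields $\phi(x_{t+1})=\phi(f(x_t,u_t))<0\le 0$.

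The main obstacle, such as it is, is conceptual rather than computational: one needs to argue that the $1-\delta$ probability from Lemma~\ref{lem:Well-calibrated} is not ``re-rolled'' at each step of the closed-loop trajectory, so that forward invariance holds across the entire rollout under a single $1-\delta$ budget. This is handled by noting that $\mathcal{E}$ is a measurable event on the offline-trained posterior and is independent of the online state sequence. Apart from this, the argument is essentially a substitution: Theorem~\ref{theo: nonempty set of safe control} gives feasibility, ISSA delivers a point in the feasible set, and the $\phi(x_t)\le 0$ hypothesis collapses the right-hand side of the safety constraint to zero, immediately giving $\phi(x_{t+1})\le 0$ on $\mathcal{E}$.
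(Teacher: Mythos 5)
Your proposal is correct and follows essentially the same route as the paper, which presents this theorem as a direct consequence of Theorem~\ref{theo: nonempty set of safe control} (feasibility), Proposition~2 of \citep{zhao2021issa} (ISSA terminates with a feasible $u_t$), and the chain $\phi(f(x_t,u_t))\le\mathbf{U}_f(x_t,u_t)<\max\{\phi(x_t)-\eta,0\}=0$. Your explicit remark that the $1-\delta$ budget attaches to the offline posterior rather than being re-drawn per step is a welcome clarification that the paper leaves implicit, but it does not change the argument.
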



\section{Experiment}

We evaluate UAISSA in two experiments: (i) Robot arm, where we apply 
 \Cref{theo: nonempty set of safe control} to ensure nonempty set of safe control for an unknown robotics manipulator system
; (ii) Safety Gym, where we apply UAISSA to safeguard unknown complex systems. 

\subsection{Robot Arm}

\begin{wrapfigure}{r}{0.2\textwidth}
\vspace{-10pt}
    \centering
    \includegraphics[width=0.18\textwidth]{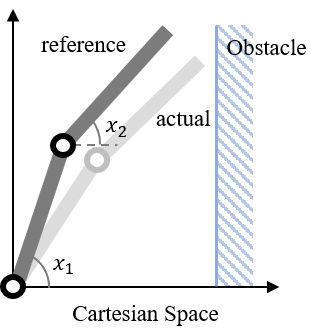}
    \caption{2DOFs robot manipulator.}
    \label{fig:2DOFS robot arm}
\vspace{-10pt}
\end{wrapfigure}
We verify the correctness of our approach on a planar robotics manipulator with 2 degrees of freedom (2DOFs)~\citep{zhao2022provably}. The robot has a four dimensional state space:  $x =[\theta_1, \theta_2, \dot\theta_1, \dot\theta_2]$, where $\theta_i$ is the $i$-th joint angle in the world frame. We consider limited state space, i.e., $\theta_1 \in [0, \pi]$, $\theta_2 \in [0, 2\pi]$, $\forall i = 1,2, \dot\theta_i \in [-0.1, 0.1]$. 
The system inputs are accelerations of the two joints, i.e. $[\ddot \theta_1, \ddot \theta_2]$. 
The system is simulated with $dt = 1$ms. The system is shown in \Cref{fig:2DOFS robot arm}, where the robot is randomly exploring the environment and we need to safeguard the robot from colliding with the wall. 
The link length of the robot is 1 meter. The wall is
1 meter
away from the robot base.

\subsubsection{Safety Index Design Running Example}
To apply \Cref{theo: nonempty set of safe control} to obtain the safety index parameters, we consider $L_f, L_{\Delta \dot d}, L_{d_x}, L_{\dot d_x}$ to be known.
Firstly, we need to find the proper state space
disretization step size $\tau_x$. We start with $\tau_x = 0.5$, and construct a learning  dataset where a safe control is sampled for each discretized state, such that \eqref{eq: the choice dataset} holds. dynamics data sample include input entry and output entry, where the input entry is a stack of state and sampled control ($[\theta_1, \theta_2, \dot\theta_1, \dot\theta_2, \ddot \theta_1, \ddot \theta_2]$), and output entry is the state at next time step. An example for data sample is: $\{[0.1, 0.5    ,-0.1, -0.1, 0.82, 0.36], [0.1,  0.49, -0.09, -0.09]\}$.

Then, we perform Gaussian Process to learn a well-calibrated dynamics model, where a uniform error bound theory (Lemma \ref{lem:uniform_error_bound_with_guarantee} in \Cref{sec: munich}) with $\delta = 1\%$ (i.e., 99\% confidence interval) is applied to select $\beta_f$. With the learnt GP model, we check if \eqref{eq: the choice of tau} holds. If not, we further decrease $\tau_x$ by multiplying $\tau_x$ with 0.99 and repeat the process. 

Finally, we find a discretization step of $\tau_x = 0.174$, resulting a dataset with $2516$ samples. By setting $\eta = 0.05$, the safety index parameterization is obtained as: $\sigma = 0, n = 1, k = 2.54$ according to \eqref{eq: the choice of safety index}. Intuitively, $k$ reflects UAISSA reaction sensitivity to unsafe situations, e.g. larger $k$ indicates safe control is more likely to be generated when the robot moves toward the obstacle.


\subsubsection{Robot Arm Results}
This section numerically verifies that the synthesized safety index facilitates probabilistic forward invariance, by showing that 1) the upper bound $\mathbf{U}_f$ of the safety index is a true upper bound; 2) there is always a feasible control that satisfies the constraint in \eqref{eq:safeguard_optimization}.

\begin{figure}
    \centering
    \includegraphics[width=0.5\columnwidth]{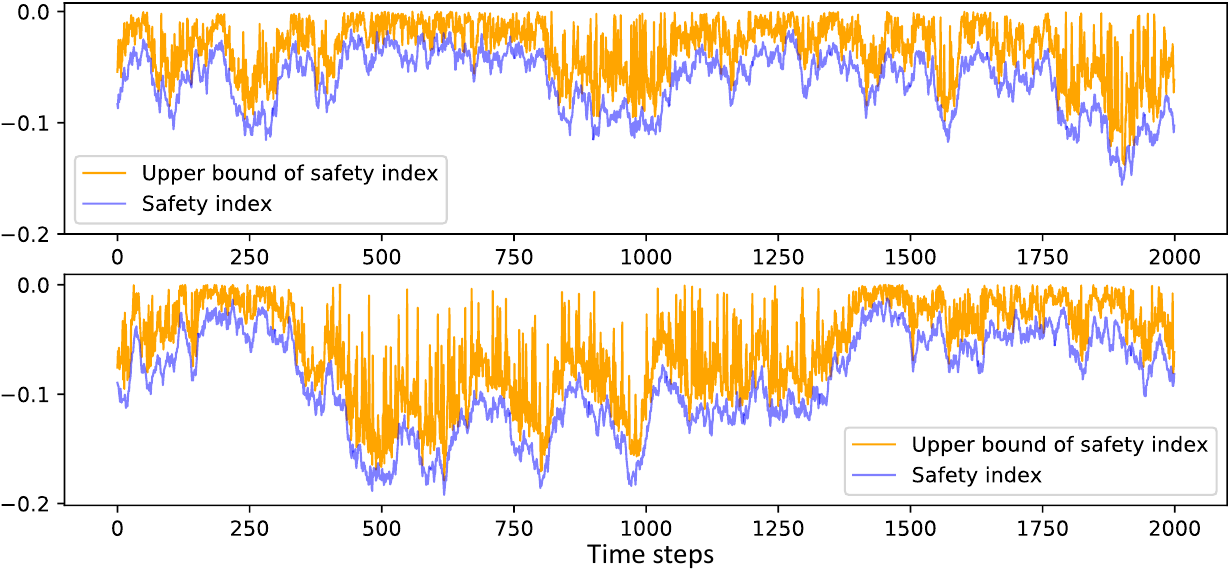}
    \vspace{-10pt}
    \caption{Evolutions of safety index and its upper bound with UAISSA over two runs.}
    \label{fig:robotarm_upperbound}
    \vspace{-10pt}
\end{figure}

We simulate the system for $2000$ time steps with the safety index parameters designed above, and the evolution of $\mathbf{U}_f(x, u)$ (orange curves) and $\phi(f(x, u))$ (blue curves) is summarized in \Cref{fig:robotarm_upperbound}. Overall, by ensuring $\mathbf{U}_f(x, u) < \max(\phi(x)-\eta, 0)$, UAISSA ensures $\phi(f(x, u)) < \max(\phi(x)-\eta, 0)$ along the simulations. As shown in \Cref{fig:robotarm_statespace}, with the safety index synthesized using \eqref{eq: the choice of safety index}, the nonempty set of safe control for all possible states are guaranteed. 
\begin{wrapfigure}{r}{0.4\textwidth}
\vspace{-10pt}
    \centering
    \includegraphics[width=0.4\textwidth]{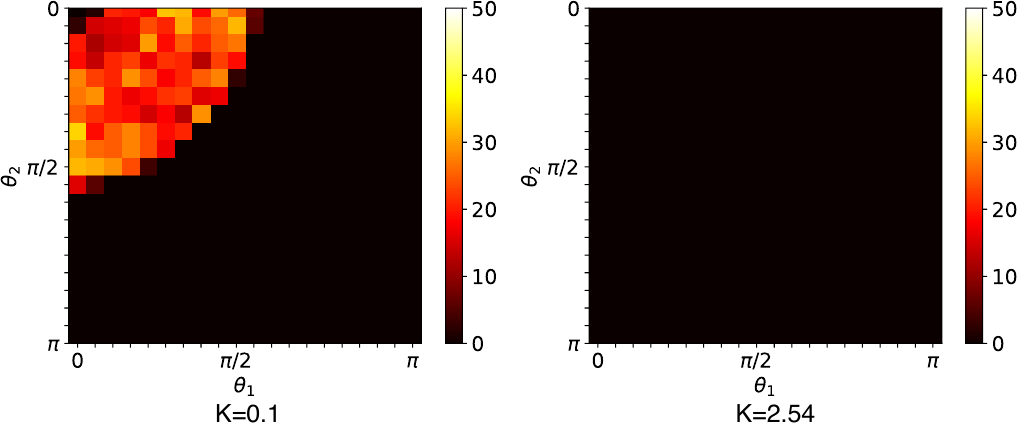}
    \caption{Distribution of states with infeasible safe controls when we optimize the safety index. Each grid in the graph corresponds to a position of joint angles $(\theta_1, \theta_2)$. We sample 100 states at each position (with different velocities of joint angles). The color denotes how many states at this position has an empty set of safe control. The left shows that a randomly selected safety index ($k=0.1$) results in empty set of safe control for many system states. The right shows that our synthesized safety index ($k=2.54$) ensures that we can always find a feasible safe control.}
    \label{fig:robotarm_statespace}
\vspace{-30pt}
\end{wrapfigure}
Furthermore, We conduct an ablation study on different discretization gap $\tau$. The results are summarized in \Cref{fig:different_tau_robotarm} (\Cref{sec: alation study}), where the gap between the upper bound of the safety index and the safety index decreases with smaller discretization gaps. This result validates our theoretical results as smaller discretization gaps result in smaller error bounds of the safety index. In practice, we believe a smaller discretization gap is beneficial to the performance of robot controllers since more accurate estimates of $\mathbf{U}_f(x,u)$ alleviate the performance drop caused by conservative safeguards. However, note that smaller discretization gaps also result in large datasets which may be computationally expensive for GP. It is a trade-off between lower computational cost and better performance.


\subsection{Safety Gym}
\begin{figure}[t]
    \centering
    \includegraphics[width=120mm]{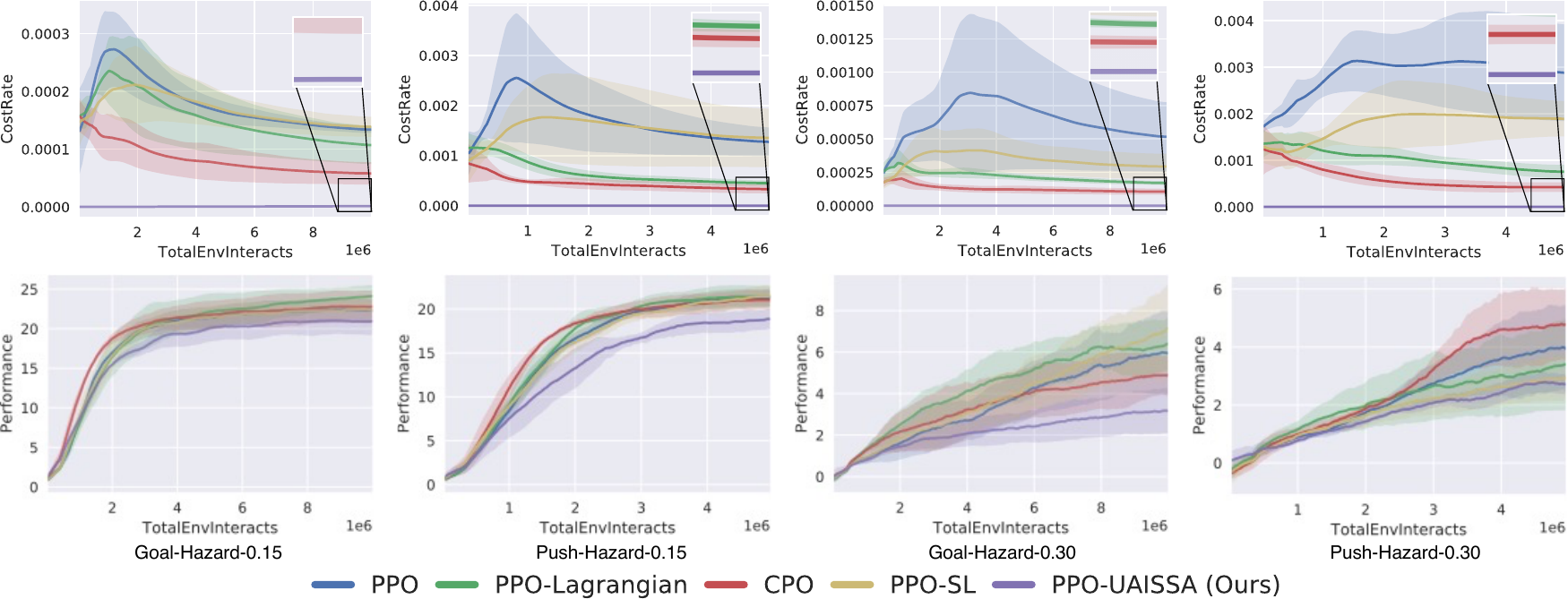}
     \caption{Cost rates and rewards of UAISSA and baselines on Safety Gym benchmarks with different tasks and sizes of the hazard over five random seeds.}
     \label{fig:goal_and_push}
     \vspace{-15pt}
\end{figure}

\paragraph{Scale to High-dimensional Environments}
One drawback of GP is that it scales very badly with the number of observations.
To scale UAISSA to hign-dimensional environments, we propose to use deep Gaussian Process~\citep{gal2016dropout} as an approximation of GP for dynamics learning. Note that the scalability comes with the price of losing theoretical safety guarantee because the uniform error bound of GP no longer holds when we use deep GP. Nevertheless, this section shows that UAISSA empirically achieves near zero-violation safety performance with deep GP.
\paragraph{Environment Setting}To test how UAISSA safeguards RL policies in high-dimensional complex environments, we conduct experiments on the widely adopted benchmark of safe RL, Safety Gym~\citep{ray2019benchmarking}. We evaluate UAISSA on two different control tasks (i.e., Goal-Hazard and Push-Hazard), where the environment settings are introduced in \Cref{appendix: experiment details}. 
\vspace{-5pt}
\paragraph{Baseline Selection}We choose PPO~\citep{schulman2017ppo} as the nomial RL algorithm and add UAISSA as a safeguard (namely PPO-UAISSA) on top of the nominal RL policy.
We compare UAISSA with (i) PPO~\citep{schulman2017ppo} (stardard RL algorithm); (ii) PPO-Lagrangian \citep{chow2017risk} and CPO~\citep{achiam2017cpo} (safe RL algorithms); (iii) PPO-SL~\citep{dalal2018safe} (RL with a different safeguard).
\vspace{-5pt}
\paragraph{Policy Settings}
Detailed parameter settings are summarized in \Cref{tab:policy_setting} (\Cref{appendix: experiment details}). All the policies in our experiments use the default hyper-parameter settings hand-tuned by Safety Gym~\citep{ray2019benchmarking} except that we set the $\text{cost limit} = 0$ for PPO-Lagrangian and CPO since the goal is to achieve zero-violation performance.
\vspace{-5pt}
\paragraph{Evaluation Results}
The evaluation results are shown in \Cref{fig:goal_and_push}, where PPO-UAISSA achieves near zero violation while gaining comparable rewards on both tasks. Note that the violations made by PPO-UAISSA are so few (nearly 1\% of violations made by standard PPO), making it hard to observe in \Cref{fig:goal_and_push}. Such results align with our probabilistic safety guarantee given in \Cref{thoem:forward_invariance}.
As for safe RL methods, both CPO and PPO-Lagrangian fail to achieve zero violation even with a cost limit of zero.
PPO-SL proposed in ~\citep{dalal2018safe}  also uses learned dynamics with an offline dataset, but PPO-SL failed to reduce safety violation due to (i) the assumption of linear cost functions is unrealistic in complex environments like MuJoCo~\citep{todorov2012mujoco}; 
(ii) the lack of quantification of the error bound from neural networks.
More experiments details,  comparison metrics  and experimental results are summarized in \Cref{sec:metrics}.

\vspace{-10pt}
\section{Conclusion}
This paper presented a safe control framework with a learned dynamics model using Gaussian process. The proposed theory guarantees (i) the nonempty set of safe control for all states under control limits, and (ii) probabilistic forward invariance to the safe set.
Simulation results on a robot arm and Safety Gym show near zero violation safety performance. One limitation of our work is that offline synthesis requires grid-based discretization of state space, which is computationally expensive for high-dimensional system. In the future work, we are going to investigate how to speed up offline synthesis, such as parallelization computation.


\bibliography{main}  

\clearpage
\appendix

\section{Related Work}
\label{sec: related work}
The desire to make learning agents persistently satisfy a state constraint drives many safe learning works to study state-wise safety. Existing works can be divided into three categories.
\begin{enumerate}
    \item The first category of works assumes the knowledge of white-box or black-box dynamics models, and achieve safe exploration by safeguarding the control actions generated by the RL agent~\citep{ferlez2020shieldnn,fisac2018general,cheng2019end,zhao2021issa}.
    Specifically, ShieldNN~\citep{ferlez2020shieldnn} designs a safety filter embedded into a neural network to achieve safety guarantees. However, ShieldNN is specially designed for an environment with the kinematic bicycle model (KBM) \citep{kong2015kinematic}, which is hard to generalize to other problem scopes. 
    Implicit safe set algorithm (ISSA)~\citep{zhao2021issa} filters out unsafe control actions by projecting them to a set of safe control, which is computed by checking the safety level of the resulting next state using one-step simulation. However, the forward simulation requires a black-box dynamics model that is hard to obtain in practice. 
    \item The second category of works achieves safety using online learned models. A safe learning framework based on Hamilton-Jocabi reachability methods is proposed to safeguard policy learning~\citep{li2020generating} with online learned GP dynamics models. However, since the learner has limited control over the data distribution during online learning,
    the policy may become very conservative in exchange for safety guarantees.
    Another model-based safe RL framework \citep{berkenkamp2017safembrl} is proposed to safely explore the environment with online learned dynamics models.
    Nevertheless, the framework requires a proper Lyapunov function that can be dissipated at all states and an initial safe policy, which are both challenging to get in complex environments. 
    \item The last category of methods provides safety guarantees based on learned models from offline datasets. ~\citep{dean2019safely} studies how to construct a dataset, and how to learn the dynamics such that constrained linear quadratic regulator system can be stabilized. However, only linear systems are studied, which limits the applicability to complex nonlinear systems. Some other methods propose to safeguard RL agents based on learned dynamics using neural networks~\citep{dalal2018safe} or Gaussian process~\citep{cheng2019end}. However, due to the lack of quantification of uniform error bounds of statistical models, such methods~\citep{dalal2018safe,cheng2019end} can not provide formal safety guarantees since the prediction error of the dynamic model may be arbitrarily big. 
    Our work can be classified into the third category. To the best of our knowledge, our paper is the first to provide a formal
    safety guarantee for nonlinear systems based on offline datasets.
\end{enumerate}


\section{Algorithm}
\label{sec: algo appdx}
\begin{algorithm}
\caption{Uncertainty-Aware Implicit Safe Set Algorithm (UAISSA)}
\label{alg:uaissa}
\begin{algorithmic}[1]
\Procedure{UAISSA}{$\pi$} 
\State \textbf{Offline Stage:}
\State Select state discretization step size $\tau$ according to \eqref{eq: the choice of tau}
\State Construct dynamics learning dataset on discretized states satisfying \eqref{eq: the choice dataset}
\State Perform Gaussian process on the dataset and choose safety index according to \eqref{eq: the choice of safety index}
\State \textbf{Online Stage:} 
\State \textbf{for} $t = 0, 1, 2, \cdots$ \textbf{do}
\State \qquad Obtain reference control $u_t^r \leftarrow \pi(x_t)$
\State \qquad Solve \eqref{eq:safeguard_optimization} to obtain safe control $u_t$ via Implicit Safe Set Algorithm (ISSA) [Algorithm 2,~\citep{zhao2021issa}], s.t. safety status of $u_t$ is \textit{SAFE} (i.e., $\mathbf{U}_f(x_t, u_t) < \max(\phi(x_t)-\eta, 0)$)
\State \qquad Apply $u_t$ to the control system
\EndProcedure
\end{algorithmic}
\end{algorithm}

\newpage

\section{Theoretical Details}

\subsection{Error Bound of One-step Safety Index Prediction}
\label{sec: one step prediction}
\begin{lemma}
\label{lem: one step prediction}
With the well-calibrated model, 
the one-step prediction error of the safety index is bounded with probability $1-\delta$, i.e.,

\begin{align}
\label{eq:error_bound_safety_index}
    &\forall x \in \mathcal{X}, P(|\phi(f(x, u)) - \phi(\mu_f(x, u))| \leq |L_\phi \beta_f \sigma_f(x ,u)|) \geq 1-\delta.
\end{align}
\begin{proof}
With the well-calibrated model, according to Lemma \ref{lem:Well-calibrated}, we have that with probability at least $1 - \delta$,
\begin{equation}\label{eq:error_bound_safety_index_proof}
\begin{split}
    |\phi(f(x, u)) - \phi(\mu_f(x, u))|
    \leq & |L_\phi ||f(x, u) - \mu_f(x, u)||_1 | \\
    \leq & |L_\phi \beta_f \sigma_f(x, u)| \\
\end{split}
\end{equation}
\end{proof}
\label{lem:error_bound_safety_index}
\end{lemma}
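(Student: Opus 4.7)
The plan is to prove the one-step safety-index prediction bound by composing two already-available facts: the GP uniform error bound from Lemma \ref{lem:Well-calibrated}, which controls $\|f(x,u)-\mu_f(x,u)\|_1$ in probability, and the Lipschitz continuity of the safety index $\phi$ with respect to the 1-norm. The argument is short and does not require any additional probabilistic machinery beyond what Lemma \ref{lem:Well-calibrated} already provides.

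First, I would fix an arbitrary $(x,u)\in\mathcal{W}$ and invoke Lemma \ref{lem:Well-calibrated}, which guarantees the existence of a constant $\beta_f = \beta_f(\delta)$ such that the event
\begin{align*}
E(x,u) := \bigl\{\|f(x,u)-\mu_f(x,u)\|_1 \leq \beta_f\,\sigma_f(x,u)\bigr\}
\end{align*}
has probability at least $1-\delta$. This is the well-calibrated model property and it is the only source of randomness in the statement.

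Next, I would use the Lipschitz property of $\phi$. Recall that $\phi$ is continuous and piece-wise smooth, and the earlier setup endows $\phi$ with a Lipschitz constant $L_\phi$ with respect to the 1-norm on the compact state space. Consequently, for any two states $x_1,x_2$, we have $|\phi(x_1)-\phi(x_2)|\le L_\phi\|x_1-x_2\|_1$. Applying this with $x_1 = f(x,u)$ and $x_2 = \mu_f(x,u)$, and conditioning on the event $E(x,u)$, I would chain the two inequalities:
\begin{align*}
|\phi(f(x,u)) - \phi(\mu_f(x,u))|
  \;\leq\; L_\phi\,\|f(x,u)-\mu_f(x,u)\|_1
  \;\leq\; L_\phi\,\beta_f\,\sigma_f(x,u).
\end{align*}
Since $L_\phi,\beta_f,\sigma_f(x,u)\geq 0$, the right-hand side equals $|L_\phi\beta_f\sigma_f(x,u)|$, matching the stated form. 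The claimed $1-\delta$ probability is inherited directly from $\Pr(E(x,u))\geq 1-\delta$.

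There is no real obstacle in this proof; the only thing to be careful about is bookkeeping. Specifically, I want to make sure that (i) the Lipschitz constant $L_\phi$ applies on the relevant portion of the state space containing both $f(x,u)$ and $\mu_f(x,u)$, which is covered by the compactness of $\mathcal{W}$ and the piece-wise smoothness of $\phi$; and (ii) the $1-\delta$ probability quantifier is attached to the GP error event and is preserved under the deterministic Lipschitz step, so no union bound is needed. With those caveats noted, the lemma follows in a single display.
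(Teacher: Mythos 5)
Your proposal is correct and follows the same two-step argument as the paper's proof: invoke the well-calibrated model bound from Lemma \ref{lem:Well-calibrated} on $\|f(x,u)-\mu_f(x,u)\|_1$ and then apply the $L_\phi$-Lipschitz continuity of $\phi$ with respect to the 1-norm. The additional bookkeeping remarks about where $L_\phi$ applies and about the probability quantifier being preserved under the deterministic Lipschitz step are sound and consistent with the paper.
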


\subsection{Necessity of Assumption \ref{asm:safe_control}}
\label{sec: necessity of assumption 3}
\begin{remark}
We show the necessity of \Cref{asm:safe_control} by contradiction. Define a subset of state space where $\mathcal{X}_d = \{x_t\in \mathcal{X} | \phi(x_t) -\eta>0,\dot{d_t} \leq 0\}$. If the system does not satisfy \Cref{asm:safe_control}, which means there might exists $ x_t \in \mathcal{X}_d$, such that $\forall u_t \in \mathcal{U},  \Delta\dot d(x_t,u) \leq 0$, hence
\begin{align}
\label{eq: delta_phi_contradiction}
\forall u \in \mathcal{U}, &\phi(x_{t+1})-\phi(x_t) \\ \nonumber 
& = (d_{min} - d_{t+1}^n - k\dot{d}_{t+1} ) - (d_{min} - d_t^n - k\dot{d_t} )\\ \nonumber
& = (d_{t}^n - d_{t+1}^n) - (k\dot{d_{t}} - k\dot d_{t+1})\\\nonumber
& \geq 0 - k \cdot \Delta\dot d(x_t,u) \\\nonumber
& \geq 0,
\end{align}
which indicates the set of safe control for $x_t$ is empty:
\begin{align}
\label{eq: empty_set_of_safe_control}
    \forall u \in \mathcal{U}, \phi(x_{t+1}) \geq \max\{\phi(x_t)-\eta, 0\}.
\end{align}
Therefore, if the system does not satisfy \Cref{asm:safe_control}, we cannot rule out the possibility of empty set of safe control for certain states, regardless of the safety index design.
Therefore, \Cref{asm:safe_control} is necessary for providing guarantee on the nonempty set of safe control for all states.
\end{remark}

\subsection{Upper Bound of Posterior Variance Function}
\begin{lemma}
\label{lem:bound_posterior_variance}
Consider a zero mean Gaussian process $g: \mathcal{Z} \rightarrow \mathbb{R}$ defined through the continuous covariance kernel k(·, ·) with Lipschitz constant $L_k$ on the compact set $\mathcal{Z}$. Let $\mathcal{Z}_\tau := \{{z}_1, {z}_2, \cdots,  {z}_N\}$ be a $\tau$-discretization set of $\mathcal{Z}$. Consider observations $({z}_i,g(z_i))$ of $g$ over $\mathcal{Z}_\tau$. The posterior variance function $\sigma^2_g$ is then bounded by a constant $\tilde{\sigma}^2_g$:
\begin{align}
\label{eq:bound_posterior_variance}
    &\forall z_* \in \mathcal{Z}, \\ \nonumber 
    &\sigma^2_g(z_*) \leq \tilde{\sigma}^2_g := 2L_k \tau +  2N L_k \tau \|K^{-1} \| \max_{z, z' \in \mathcal{Z}}k(z, z').
\end{align}

\begin{proof}
Since $\sigma_g$ is a scalar, the difference of absolute values of the posterior variance at $z_*$ and $z_i \in \mathcal{Z}_\tau$ can be expressed as:
\begin{align}
    \label{eq:variance_bound_1}
    |\sigma_g^2(z_*) - \sigma_g^2(z_i) | &=  |\sigma_g(z_*) + \sigma_g(z_i) | \cdot |\sigma_g(z_*) - \sigma_g(z_i) |.
\end{align}
Note that the standard deviation is positive semidefinite (i.e., $\sigma \geq 0$), we have
\begin{align}
    \label{eq:variance_bound_3}
    |\sigma_g(z_*) - \sigma_g(z_i) |^2 \leq |\sigma_g^2(z_*) - \sigma_g^2(z_i) |.
\end{align}
With \eqref{eq:variance_bound_3}, we can bound the difference of standard deviation by taking the square root of the the difference of variance. With the Cauchy-Schwarz inequality and 1-norm Lipschitz constant $L_k$ of kernel function $k(\cdot, \cdot)$, the 1-norm difference of the variance can be bounded by
\begin{align}
    \label{eq:variance_bound_4}
    & \|\sigma_g^2(z_*) - \sigma_g^2(z_i) \|_1  \\ \nonumber 
    = & \| k(z_*, z_*) - k_*^T(z_*)K^{-1}k_*(z_*) \\ \nonumber
    & - k(z_i, z_i) + k_*^T(z_i)K^{-1}k_*(z_i) \|_1 \\ \nonumber  
    \leq & \big\| 2L_k\|z_*-z_i\|_1 \quad \leftarrow \texttt{Lipchitz } L_k \\ \nonumber 
    & + \|k_*(z_*)-k_*(z_i)\|_2 \|K^{-1}\|_2 \|k_*(z_*)+k_*(z_i)\|_2 \big\|_1. \\ \nonumber
    & \quad\quad\quad\quad\quad \leftarrow \texttt{Cauchy-Schwarz inequality}
\end{align}
where $\| K^{-1}\|$ denotes the Frobenius norm of the matrix $ K^{-1}$. Note that the first term of RHS of \eqref{eq:variance_bound_4} can be bounded as
\begin{align}
    \label{eq:variance_bound_5}
    & 2L_k\|z_*-z_i\|_1 \\ \nonumber
    \leq & 2L_k \tau.
\end{align}
As for the second term of RHS of \eqref{eq:variance_bound_4}, on one hand we have
\begin{align}
    \label{eq:variance_bound_6}
    & \|k_*(z_*)-k_*(z_i)\|_2 \\ \nonumber
    \leq & \sqrt{N}L_k\| z_* - z_i\|_1\\ \nonumber
    \leq & \sqrt{N} L_k \tau,
\end{align}
where $N$ is the number of data points for GP.
On the other hand, we have
\begin{align}
    \label{eq:variance_bound_7}
    & \|k_*(z_*)+k_*(z_i)\|_2 \\ \nonumber
    \leq & 2\sqrt{N}\max_{
    z, z' \in \mathcal{Z}}k(z, z').
\end{align}
Substituting \eqref{eq:variance_bound_5}, \eqref{eq:variance_bound_6} and \eqref{eq:variance_bound_7} into \eqref{eq:variance_bound_4}, we have
\begin{align}
    \label{eq:variance_bound_8}
    & \|\sigma_g^2(z_*) - \sigma_g^2(z_i) \|_1  \\ \nonumber 
    \leq & 2L_k \tau +  \sqrt{N} L_k \tau \times \|K^{-1} \| \times 2\sqrt{N}\max_{
    z, z' \in \mathcal{Z}}k(z, z') \\ \nonumber  
    \leq & 2L_k \tau +  2N L_k \tau \|K^{-1} \| \max_{z, z' \in \mathcal{Z}}k(z, z').
\end{align}
With \eqref{eq:variance_bound_3} and \eqref{eq:variance_bound_8}, we obtain the bound of standard deviation as 
\begin{align}
    \label{eq:variance_bound_9}
    & \|\sigma_g(z_*) - \sigma_g(z_i) \|_1 \\ \nonumber
    \leq & \sqrt{\|\sigma_g^2(z_*) - \sigma_g^2(z_i) \|_1} \\ \nonumber
    \leq & \sqrt{2L_k \tau +  2N L_k \tau \|K^{-1} \| \max_{z, z' \in \mathcal{Z}}k(z, z')}.
\end{align}
With \eqref{eq:variance_bound_9}, we further bound $\sigma_g(z_*)$ by the fact that $\sigma_g(z_i) = 0$ (because $(z_i, g(z_i))$ is a data point for GP) as follows.
\begin{align}
    \label{eq:variance_bound_10}
    & \sigma_g(z_*) \\ \nonumber
    = & \|\sigma_g(x,u_0) - \sigma_g(z_i) \|_1 \\ \nonumber
    \leq & \sqrt{2L_k \tau +  2N L_k \tau \|K^{-1} \| \max_{z, z' \in \mathcal{Z}}k(z, z')}.
\end{align}
For simplicity, we denote $\tilde{\sigma}_g = \sqrt{2L_k \tau +  2N L_k \tau \|K^{-1} \| \max_{z, z' \in \mathcal{Z}}k(z, z')}$. Note that $\tilde{\sigma}_g$ is a value determined by the GP model: (i) kernel function $k(\cdot, \cdot)$ of GP; (ii) data points for GP; (iii) GP discretization gap $\tau$.
\end{proof}
\end{lemma}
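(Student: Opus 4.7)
The plan is to fix an arbitrary $z_*\in\mathcal{Z}$, use the $\tau$-discretization to find a sample point $z_i\in\mathcal{Z}_\tau$ with $\|z_*-z_i\|_1\le\tau$, and then bound the posterior variance at $z_*$ by comparing it to the posterior variance at $z_i$, which must be zero since $(z_i,g(z_i))$ is in the dataset (noise-free observations drive the posterior variance to $0$ at sample points). The whole result then reduces to showing that the posterior variance is H{\"o}lder-continuous on $\mathcal{Z}$ with a modulus governed by $L_k$, $\tau$, the dataset size $N$, and $\|K^{-1}\|$.

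First, I would write the posterior variance via \eqref{eq: sigma gp} (with $\sigma_{\text{noise}}=0$) as $\sigma_g^2(z) = k(z,z) - k_*^T(z)K^{-1}k_*(z)$ and split $|\sigma_g^2(z_*)-\sigma_g^2(z_i)|$ into a diagonal part $|k(z_*,z_*)-k(z_i,z_i)|$ and a quadratic-form part $|k_*^T(z_*)K^{-1}k_*(z_*) - k_*^T(z_i)K^{-1}k_*(z_i)|$. The diagonal part is at most $2L_k\tau$ by applying the $L_k$-Lipschitz property of $k$ in each argument separately (triangle inequality through the point $k(z_*,z_i)$).

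For the quadratic-form part, the key algebraic move is to write $a^T M a - b^T M b = (a-b)^T M\,a + b^T M(a-b)$ for $a=k_*(z_*)$, $b=k_*(z_i)$, $M=K^{-1}$, which via Cauchy--Schwarz and submultiplicativity of the $2$-norm gives
\begin{equation*}
|a^T M a - b^T M b| \le \|a-b\|_2\,\|M\|_2\,(\|a\|_2+\|b\|_2).
\end{equation*}
Then $\|k_*(z_*)-k_*(z_i)\|_2\le\sqrt{N}\,L_k\tau$ (componentwise $L_k$-Lipschitz bound applied to each of the $N$ entries) and $\|k_*(z_*)\|_2+\|k_*(z_i)\|_2 \le 2\sqrt{N}\,\max_{z,z'\in\mathcal{Z}}k(z,z')$, so the quadratic-form part is at most $2N L_k\tau\|K^{-1}\|\max_{z,z'}k(z,z')$. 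Adding the two bounds gives the claimed uniform bound on $|\sigma_g^2(z_*)-\sigma_g^2(z_i)|$. Finally, using $\sigma_g^2(z_i)=0$ and $\sigma_g^2(z_*)\ge 0$ collapses the absolute value and yields \eqref{eq:bound_posterior_variance}.

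The main obstacle I expect is the quadratic-form comparison: choosing the right splitting so that only $\|k_*(z_*)-k_*(z_i)\|_2$ (which is $O(\tau)$) multiplies $\|K^{-1}\|$ and the sum of norms (which is $O(1)$ in $\tau$), rather than trying to bound the bilinear difference term-by-term, which would not exploit the Lipschitz gap cleanly. A minor subtlety is that $\|K^{-1}\|$ here is the operator/Frobenius norm; the proof only needs a submultiplicative norm consistent with the $2$-norm on vectors, so using the Frobenius norm (as the paper does) is safe. Everything else is a mechanical chaining of Lipschitz bounds and Cauchy--Schwarz.
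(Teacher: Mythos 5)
Your proposal is correct and follows essentially the same route as the paper's proof: compare $\sigma_g^2(z_*)$ to $\sigma_g^2(z_i)=0$ at the nearest sample point, bound the diagonal kernel difference by $2L_k\tau$ via Lipschitz continuity, and bound the quadratic-form difference by Cauchy--Schwarz using $\|k_*(z_*)-k_*(z_i)\|_2\le\sqrt{N}L_k\tau$ and $\|k_*(z_*)\|_2+\|k_*(z_i)\|_2\le 2\sqrt{N}\max k$. The only (harmless) difference is that you conclude directly on the variance, whereas the paper detours through the standard deviation via $|\sigma_g(z_*)-\sigma_g(z_i)|^2\le|\sigma_g^2(z_*)-\sigma_g^2(z_i)|$ before arriving at the same bound.
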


\subsection{Upper Bound of Posterior Mean Function}
\begin{lemma}
\label{lem:bound_posterior_mean}
Consider a zero mean Gaussian process $g: \mathcal{Z} \rightarrow \mathbb{R}$ defined through the continuous covariance kernel k(·, ·) with Lipschitz constant $L_k$ on the compact set $\mathcal{Z}$. Let $\mathcal{Z}_\tau := \{{z}_1, {z}_2, \cdots,  {z}_N\}$ be a $\tau$-discretization set of $\mathcal{Z}$. Consider observations $({z}_i,g(z_i))$ of $g$ over $\mathcal{Z}_\tau$,
the posterior mean function $\mu_g$ is bounded by:

\begin{align}
\label{eq:bound_posterior_mean}
    \mu_g \leq &\max\{g(z_1), g(z_2), \cdots, g(z_N)\} + \sqrt{N}L_k\tau \|K^{-1}y_N\|_2.
\end{align}

\begin{align}
    \mu_g \leq &\max\{g(z_1), g(z_2), \cdots, g(z_N)\} + \sqrt{N}L_k\tau \|K^{-1}y_N\|_2.
\end{align}
\end{lemma}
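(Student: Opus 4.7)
The plan is to bound $\mu_g(z_*)$ at an arbitrary query point $z_* \in \mathcal{Z}$ by comparing it against $\mu_g(z_i)$ at the nearest discretization point $z_i \in \mathcal{Z}_\tau$, and then exploiting the fact that in the noise-free setting the Gaussian process interpolates the data exactly, i.e., $\mu_g(z_i) = g(z_i)$. This is structurally parallel to the proof of \Cref{lem:bound_posterior_variance}, except that here the difference in posterior means does not collapse to zero at a data point and must instead be controlled globally via the linear coefficient vector $K^{-1} y_N$.

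First, I would invoke the definition of a $\tau$-discretization to pick $z_i \in \mathcal{Z}_\tau$ such that $\|z_* - z_i\|_1 \leq \tau$. Using the closed-form posterior mean from \eqref{eq:gp_mu_sigma}, I would then write
\begin{align*}
\mu_g(z_*) - \mu_g(z_i) = \bigl(k_*(z_*) - k_*(z_i)\bigr)^T K^{-1} y_N,
\end{align*}
and apply the Cauchy--Schwarz inequality to obtain
\begin{align*}
|\mu_g(z_*) - \mu_g(z_i)| \leq \|k_*(z_*) - k_*(z_i)\|_2 \cdot \|K^{-1} y_N\|_2.
\end{align*}

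Next, I would bound the kernel-difference vector entrywise using the Lipschitz property of $k(\cdot,\cdot)$: for each $j$, $|k(z_j, z_*) - k(z_j, z_i)| \leq L_k \|z_* - z_i\|_1 \leq L_k \tau$, and hence $\|k_*(z_*) - k_*(z_i)\|_2 \leq \sqrt{N} L_k \tau$. Combining this with the noise-free interpolation identity $\mu_g(z_i) = g(z_i) \leq \max_{j} g(z_j)$ immediately yields
\begin{align*}
\mu_g(z_*) \leq \max_{j \in \{1,\ldots,N\}} g(z_j) + \sqrt{N} L_k \tau \, \|K^{-1} y_N\|_2,
\end{align*}
which is the claimed bound.

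The argument is not algebraically heavy; the only real subtlety is making sure the Lipschitz constant $L_k$ is invoked correctly for the partially-applied kernel $z \mapsto k(z_j, z)$ at each fixed data point $z_j$, and that the $\sqrt{N}$ factor in the vector 2-norm comes out precisely once. Since the kernel is Lipschitz on the compact set $\mathcal{Z}$, this partial Lipschitz property is inherited directly, so no additional work beyond a careful chain of norm inequalities is needed. The appearance of $\|K^{-1} y_N\|_2$ (rather than a data-independent constant) reflects the intrinsic fact that the posterior mean's spatial variation is governed by the representer-weight vector $K^{-1} y_N$, which is the main structural difference from the variance bound in \Cref{lem:bound_posterior_variance}.
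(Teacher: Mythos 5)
Your proposal is correct and follows essentially the same route as the paper's proof: the closed-form posterior mean, the Cauchy--Schwarz bound $|\mu_g(z_*) - \mu_g(z_i)| \leq \|k_*(z_*) - k_*(z_i)\|_2 \|K^{-1}y_N\|_2$, the Lipschitz bound $\|k_*(z_*) - k_*(z_i)\|_2 \leq \sqrt{N}L_k\tau$, and the noise-free interpolation identity $\mu_g(z_i) = g(z_i)$. If anything, you are slightly more careful than the paper, which leaves the interpolation step implicit when passing from the mean-difference bound to the final inequality.
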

\begin{proof}

According to \eqref{eq:gp_mu_sigma}, we have the posterior mean function that satisfies:
\begin{align}
    \mu_g(z_*) &= k^T_*(z_*)K^{-1}y_N,
\end{align}
where $K_{i,j} = k(z_i, z_j)$ and $k_*(z_*) = [k(z_1, z_*), k(z_2, z_*), \cdots, k(z_N, z_*)]^T$.  With the Cauchy-Schwarz inequality and 1-norm Lipschitz constant $L_k$ of kernel function $k(\cdot, \cdot)$, the difference of absolute values of mean function can be bounded by:
\begin{align}
    \|\mu_g(z_*) - \mu_g(z_i)\|_1  &= \|(k^T_*(z_*) - k^T_*(z_i))K^{-1}y_N\|_1 \\ \nonumber 
    &\leq \|k^T_*(z_*) - k^T_*(z_i)\|_2\|K^{-1}y_N\|_2 \\ \nonumber 
    &\leq \sqrt{N}L_k\|z_*-z_i \|_1 \|K^{-1}y_N\|_2\\ \nonumber 
    &\leq \sqrt{N}L_k\tau \|K^{-1}y_N\|_2
\end{align}

Therefore, the posterior mean function $\mu_g$ is bounded by:
\begin{align}
    \mu_g \leq &\max\{g(z_1), g(z_2), \cdots, g(z_N)\} \\ \nonumber 
    &+ \sqrt{N}L_k\tau \|K^{-1}y_N\|_2.
\end{align}
\end{proof}

\subsection{Upper Bound of Posterior Variance of Multi-Dimensional Function}
\begin{lemma}
\label{lem:bound_posterior_variance_multidimensional}
Consider a zero mean Gaussian process $g: \mathcal{Z} \rightarrow \mathbb{R}^D$ defined through the continuous covariance kernel k(·, ·) with Lipschitz constant $L_k$ on the compact set $\mathcal{Z}$. Let $\mathcal{Z}_\tau := \{{z}_1, {z}_2, \cdots,  {z}_N\}$ be a $\tau$-discretization set of $\mathcal{Z}$. 
Consider observations $(z_i,g(z_i))$ of $g$ over $\mathcal{Z}_\tau$, the posterior standard deviation function $\sigma_g$ is bounded by $\tilde{\sigma}_g$:
\begin{align}
\label{eq:bound_posterior_variance_multidimensional}
    &\forall z_* \in \mathcal{Z}, \\ \nonumber 
    &\sigma_g(z_*) \leq \tilde{\sigma}_g \\ \nonumber 
    &\quad \quad \quad = D \big(2L_k \tau +  2N L_k \tau \|K^{-1} \| \max_{z, z' \in \mathcal{Z}}k(z, z')\big)^{\frac{1}{2}}.\\
\end{align}


\begin{remark}
\label{multidimensional_GP}
In the case of multiple output dimensions ($D > 1$), we consider GP regression on a function with one-dimensional output $g'(z,i): \mathcal{Z}\times\mathcal{I} \rightarrow \mathbb{R}$, with the output dimension indexed by $i \in \mathcal{I} = \{1,2,\cdots,D\}$. This allows us to use the standard GP methods to provide uniform error bounds with multi-dimensional outputs~\citep{berkenkamp2017safembrl}. Then, we define the posterior distribution of Gaussian process as $\mu_g(z) = [\mu_{g'}(z,1), \cdots, \mu_{g'}(z,D)]^T$ and $\sigma_g(z) = \sum_{1 \leq i \leq D}\sigma_{g'}(z, i)$.
\end{remark}
\begin{proof}
According to the remark, to bound $\sigma_g(z)$, we need to bound $\sigma_{g'}(z, j)$ of each dimension.
As more data samples for GP could only decrease the posterior variance~\citep{williams2006gaussian}, we can bound $\sigma_{g'}(z, j)$ with \textit{less data} to obtain an upper bound. 

As for bounding $\sigma_{g'}(z, j)$ with \textit{less data}, it means that we compute the bound for $\sigma_{g'}(z, j)$ solely with the data samples for $j$-th output dimension, i.e. $\mathcal{Z} \times j \rightarrow \mathbb{R}$, where $j \in \mathcal{I}$.


The proof of the bound for $\sigma_{g'}(z, j)$ is almost identical to the proof in Lemma \ref{lem:bound_posterior_variance}, except that we 
replace (i) $z_*$ with $(z_*, j)$, and (ii) $z_i$ with $(z_i, j)$. Then, we replace $\|z_* - z_i\|_1 \leq \tau$ in \eqref{eq:variance_bound_5} and \eqref{eq:variance_bound_6} with
\begin{align}
    \label{eq:from_z_to_z_j_multidimensional}
    \| (z_*, j) - (z_i, j)\|_1 = \| z_* - z_i\|_1 + \|j - j\|_1 \leq \tau.
\end{align}
For most of the common kernels, e.g. RBF kernel, $k_{z,z'\in\mathcal{Z}}((z,j),(z,j')) = k_{z,z'\in\mathcal{Z}}((z),(z))$, hence, $\sigma_{g'}(z, j)$ can be bounded as:
\begin{align}
    \label{eq:bound_of_z_j}
    \sigma_{g'}(z_*, j) \leq \big(2L_k \tau +  2N L_k \tau \|K^{-1} \| \max_{z, z' \in \mathcal{Z}}k(z, z')\big)^{\frac{1}{2}}.
\end{align}
where $K_{i,j} = k(z_i, z_j)$. We can further bound $\sigma_g$ based on the definition where $\sigma_g(z_*) = \sum_{j \leq D}\sigma_{g'}(z_*, j)$:

\begin{align}
\label{eq:bound_posterior_variance_multidimensional_inproorf}
    &\forall z_* \in \mathcal{Z}, \\ \nonumber 
    &\sigma_g(z_*) \leq \tilde{\sigma}_g \\ \nonumber 
    &\quad \quad \quad = D \big(2L_k \tau +  2N L_k \tau \|K^{-1} \| \max_{z, z' \in \mathcal{Z}}k(z, z')\big)^{\frac{1}{2}}.\\
\end{align}
which directly yields the result.
\end{proof}
\end{lemma}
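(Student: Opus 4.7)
The plan is to reduce the multi-dimensional bound to the scalar bound already established in Lemma \ref{lem:bound_posterior_variance}. Following the construction in the accompanying remark, I would recast the vector-valued Gaussian process $g:\mathcal{Z}\to\mathbb{R}^D$ as a scalar-valued Gaussian process $g':\mathcal{Z}\times\mathcal{I}\to\mathbb{R}$ on the augmented input space, where $\mathcal{I}=\{1,\dots,D\}$ indexes the output coordinates. Under this reformulation, the vector posterior standard deviation $\sigma_g(z_*)$ decomposes coordinate-wise as $\sigma_g(z_*)=\sum_{j=1}^{D}\sigma_{g'}(z_*,j)$, so it suffices to bound each $\sigma_{g'}(z_*,j)$ by the scalar quantity $\bigl(2L_k\tau+2NL_k\tau\|K^{-1}\|\max_{z,z'\in\mathcal{Z}}k(z,z')\bigr)^{1/2}$ and then sum the $D$ contributions.

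First I would argue that augmenting observations can only decrease posterior variance: if I discard all training points $(z_i,j')$ with $j'\neq j$ and retain only the slice corresponding to the $j$-th output dimension, the resulting posterior variance at $(z_*,j)$ is an upper bound on $\sigma_{g'}(z_*,j)$. This is a standard monotonicity property of GP conditioning and will let me appeal to the single-output result. Second, I would verify that Lemma \ref{lem:bound_posterior_variance} applies on the slice: the augmented distance satisfies $\|(z_*,j)-(z_i,j)\|_1=\|z_*-z_i\|_1\le\tau$ because the index coordinate is unchanged, so a $\tau$-discretization on $\mathcal{Z}$ induces a $\tau$-discretization on the slice. For commonly used kernels (RBF, Matérn, etc.) the restriction $k\bigl((\cdot,j),(\cdot,j)\bigr)$ inherits the Lipschitz constant $L_k$ and the same maximum value $\max_{z,z'\in\mathcal{Z}}k(z,z')$, so all constants in the scalar bound carry over unchanged.

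Once the per-coordinate bound $\sigma_{g'}(z_*,j)\le\bigl(2L_k\tau+2NL_k\tau\|K^{-1}\|\max_{z,z'}k(z,z')\bigr)^{1/2}$ is in hand, summing over $j=1,\dots,D$ yields the claimed $\tilde{\sigma}_g=D\bigl(2L_k\tau+2NL_k\tau\|K^{-1}\|\max_{z,z'}k(z,z')\bigr)^{1/2}$.

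The main obstacle I anticipate is the second step: cleanly justifying that the Lipschitz constant $L_k$ of the original kernel transfers to the augmented kernel on $\mathcal{Z}\times\mathcal{I}$, and that the Gram matrix norm $\|K^{-1}\|$ appearing in the bound is the one computed from the full $N$ training points rather than the sliced one. The cleanest route is to first establish the bound using the sliced Gram matrix $K_j$ and then use the fact that $\|K_j^{-1}\|\le\|K^{-1}\|$ (or else simply adopt the convention, consistent with the scalar proof, of computing the bound from the worst-case quantities). A minor secondary issue is that the remark's $\sigma_g(z)=\sum_j\sigma_{g'}(z,j)$ is a definition rather than a derived identity, so I would state it explicitly at the outset so the coordinate-wise summation step is unambiguous.
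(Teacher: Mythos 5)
Your proposal matches the paper's proof essentially step for step: both reduce to the scalar case via the augmented-input reformulation $g'(z,j)$ from the remark, invoke the monotonicity of posterior variance under added data to restrict to the $j$-th output slice, observe that $\|(z_*,j)-(z_i,j)\|_1=\|z_*-z_i\|_1\le\tau$ so that Lemma \ref{lem:bound_posterior_variance} applies with the same constants, and sum over the $D$ coordinates. Your closing remarks about the transfer of $L_k$ and the choice of Gram matrix are, if anything, slightly more careful than the paper's treatment, which simply asserts the kernel identity for common kernels.
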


\subsection{Proof of Proposition \ref{proposition:safe_control_discretization}}
\label{sec: proof prop nonempty set of safe control}
\begin{proof}
Denote $A = L_\phi L_f \tau_x$, $B = L_\phi \tau_x$, and $C =2L_\phi\beta_f\tilde{\sigma}_f$, where \\
$\tilde{\sigma}_f = n_x \sqrt{2L_k \tau_x +  2|\mathcal{X}_{\tau_x}| L_k \tau_x \|K^{-1} \| \max_{w, w' \in \mathcal{W}}k(w, w')}$. For any $x\in\mathcal{X}$, we denote $x_{\tau_0} = \argmin_{x_i \in \mathcal{X}_{\tau_x}}\|x_i - x\|_1$ as the closest discretized state. We further denote $(x_{\tau_0}, u_0)$ as a data sample from $\mathcal{D}_\tau$. Note that $\forall (x_i, u_i),
\mathbf{U}_f(x_i, u_i) < \max\{\phi(x_i)-\eta, 0\} - A - B - C$.


Next, We will prove Proposition \ref{proposition:safe_control_discretization} by showing  $\forall x \in \mathcal{X}, \mathbf{U}(x, u_0) < \max \{\phi(x)-\eta, 0\}$. We first expand the terms of $\mathbf{U}_f(x, u_0)$:
\begin{equation}\label{eq:U_f(x,u)}
\begin{split}
    \mathbf{U}(x ,u_0) 
    & = \phi(\mu_f(x,u_0)) + L_\phi \beta_f \sigma_f(x, u_0)\\
    & = \phi(\mu_f(x,u_0)) - \phi(f(x,u_0)) \\
    & \quad + \phi(f(x, u_0)) + L_\phi \beta_f \sigma_f(x, u_0)\\
\end{split}
\end{equation}

Based on Lipschitz constants and error bound in well-calibrated model, the following inequality derived from \eqref{eq:U_f(x,u)} holds with probability at least $1-\delta$:
\begin{equation}\label{eq:safe_control_discretization_proof_1}
\begin{split}
    \mathbf{U}(x ,u_0) 
    & \leq L_\phi \| \mu_f(x, u_0) - f(x, u_0) \|_1 \\
    & \quad + \phi(f(x, u_0)) + L_\phi \beta_f \sigma_f(x, u_0) \\ 
    &\quad\quad\quad\quad\quad\quad\quad \leftarrow \texttt{Lipchitz } L_\phi\\
    & \leq L_\phi \beta_f \sigma_f(x, u_0)  + \phi(f(x, u_0)) \\
    & \quad + L_\phi \beta_f \sigma_f(x, u_0) \\
    &\quad\quad\quad\quad\quad\quad\quad \leftarrow \texttt{GP error bound}\\ 
    & = \phi(f(x, u_0)) + 2L_\phi \beta_f \sigma_f(x, u_0) \\
    & = \phi(f(x, u_0)) - \phi(f(x_{\tau_0}, u_0)) + \phi(f(x_{\tau_0}, u_0)) \\ 
    & \quad + 2L_\phi \beta_f \sigma_f(x, u_0) \\
    & \leq L_\phi\|f(x, u_0) - f(x_{\tau_0}, u_0)\|_1 + \mathbf{U}_f(x_{\tau_0}, u_0) \\
    &\quad + 2L_\phi \beta_f \sigma_f(x, u_0) \quad \leftarrow \texttt{Lipchitz } L_f\\
    & \leq L_\phi L_f \|(x, u_0) - (x_{\tau_0}, u_0) \|_1 \\
    & \quad + \max\{\phi(x_{\tau_0})-\eta, 0\} - A - B - C \\
    & \quad + 2L_\phi \beta_f \sigma_f(x, u_0)
\end{split}
\end{equation}

By definition of the discretization, we have on each grid cell that
\begin{align}
    \label{eq:norm_tau}
    \|(x,u_0) - (x_{\tau_0}, u_0) \|_1 &= \|x - x_{\tau_0} \|_1 + \|u_0 - u_0 \|_1 \leq \tau_x.
\end{align}
According to the Lipschitz property of safety index $\phi$, we also have the following condition holds: 
\begin{align}\label{eq:phi_tau}
    \|\phi(x) - \phi(x_{\tau_0})\|_1 &\leq L_\phi \| x - x_{\tau_0}\|_1 = L_\phi \tau_x
\end{align}
which indicates that $\phi(x_{\tau_0}) \leq \phi(x)+L_\phi\tau_x$, hence:
\begin{align}\label{eq:phi_x_tau}
\max\{\phi(x_{\tau_0})-\eta, 0\} \leq \max\{\phi(x)-\eta  +L_\phi\tau_x, L_\phi\tau_x\}.
\end{align}

To bound $\sigma_f(x,u_0)$ in \eqref{eq:safe_control_discretization_proof_1}, we can follow a similar proof in Lemma \ref{lem:bound_posterior_variance_multidimensional} by replacing (i) $z_*$ with $(x, u_0)$, and (ii) $z_i$ with $(x_{\tau_0}, u_0)$. Therefore, we can replace $\| z_* - z_i \| \leq \tau$ in \eqref{eq:from_z_to_z_j_multidimensional}  with
\begin{align}
    \label{eq:from_z_to_x_u_0}
    \| (x, u_0) - (x_{\tau_0}, u_0)\|_1 = \| x - x_{\tau_0}\|_1 + \|u_0 - u_0\|_1 \leq \tau_x.
\end{align}
Since $(x_{\tau_0}, u_0)$ is a data sample from $\mathcal{D}_\tau$, then $\sigma_f(x_{\tau_0}, u_0) = 0$, a similar inequality as \eqref{eq:bound_posterior_variance_multidimensional_inproorf} can be established, indicating 
\begin{align}
    \label{eq:tileda_sigma_x_u_0}
    &\sigma_f(x, u_0) \\ \nonumber 
    \leq & \; \tilde{\sigma}_f \\ \nonumber 
    = & \; n_x \sqrt{2L_k \tau_x +  2|\mathcal{X}_{\tau_x}| L_k \tau_x \|K^{-1} \| \max_{w, w' \in \mathcal{W}}k(w, w')}.
\end{align}
Plugging \eqref{eq:norm_tau}, \eqref{eq:phi_x_tau} and $\eqref{eq:tileda_sigma_x_u_0}$ into \eqref{eq:safe_control_discretization_proof_1}, we have that 
\begin{align}
    \label{eq:safe_control_discretization_proof_2}
    &\mathbf{U}_f(x, u_0) \\ \nonumber 
    \leq & \; L_\phi L_f \tau_x + \max\{\phi(x_{\tau_0})-\eta, 0\} - A - B - C \\ \nonumber
    & + 2L_\phi \beta_f \sigma_f(x, u_0)\\ \nonumber
    \leq & \; \big(L_\phi L_f \tau_x - A \big) + \big(\max\{\phi(x_{\tau_0})-\eta, 0\} - B\big) \\ \nonumber  
    & + \big(2L_\phi \beta_f \tilde\sigma_f -C\big) \\ \nonumber
    = & \; \big(L_\phi L_f \tau_x - L_\phi L_f \tau_x \big) + \big(\max\{\phi(x_{\tau_0})-\eta, 0\} - L_\phi\tau_x \big) \\ \nonumber
    & \; + \big(2L_\phi \beta_f \tilde\sigma_f -  2L_\phi \beta_f \tilde{\sigma}_f\big) \\ \nonumber
    \leq & \; 0 + \big( \max\{\phi(x)-\eta  +L_\phi\tau_x, L_\phi\tau_x\} - L_\phi\tau_x \big) + 0 \\ \nonumber
    = & \; \max\{\phi(x)-\eta, 0\}~,
\end{align}
which directly yields the result.

\end{proof}

\subsection{Probabilistic Uniform Error Bound Leveraging Lipschitz Continuity of Unknown Function}
\label{sec: munich}
In this subsection, we can construct high-probability confidence intervals on the unknown system dynamics in \eqref{eq:dynamics fn} that fulfill the well-calibrated model using the Gaussian process leveraging the Lipschitz continuity of unknown function.
\begin{lemma}
[\citep{lederer2019uniform}, Theorem 3.3] Assume a zero mean Gaussian process defined through the continuous covariance kernel $k(·, ·)$ with Lipschitz constant $L_k$ on the set $\mathcal{Z}$. Furthermore, assume a continuous unknown function $g: \mathcal{Z} \to \mathbb{R}$ with Lipschitz constant $L_g$ and $N \in \mathbb{N}$ observations $y_i$. Then, the posterior mean function $\mu_g(.)$ and standard deviation $
\sigma_g(.)$ of a Gaussian process  conditioned on the training data $\{(z_i, y(z_i))\}_{i=1}^N$ are continuous with Lipschitz constant $L_{\mu_g}$ and modulus of continuity $\omega_{\sigma N}(.)$ on $\mathcal{Z}$ such that
\begin{align}
    &L_{\mu_g} \leq L_k\sqrt{N}\|(K + \sigma_{\text{noise}}^2 I_N)^{-1} y_N \| \\
    &\omega_{\sigma N}(\tau) \\ \nonumber
    &\leq \sqrt{2\tau L_k \big( 1 + N \| K + \sigma_{\text{noise}}^2 I_N)^{-1}\| \max_{z, z' \in\mathcal{Z}} k( z, z')\big)}.
\end{align}

Moreover, pick $\delta \in (0,1), \tau \in \mathbb{R}_+$ and set 
\begin{align}
    \beta(\tau) &= 2\log\big(\frac{M(\tau, \mathbb{Z})}{\delta}\big) \\
    \gamma(\tau) &= (L_{\mu_g} + L_g) \tau + \sqrt{\beta(\tau) }\omega_{\sigma N}(\tau).
\end{align}
where $M(\tau, \mathbb{Z})$ is the covering number defined in~\citep{lederer2019uniform}. Then, it holds that 
\begin{align}
    &P\big( |g(z) - \mu_g(z) | \leq \sqrt{\beta(\tau)}\sigma_g(z) + \gamma(\tau), \forall z \in \mathcal{Z} \big) \\ \nonumber 
    &\geq 1 - \delta.
\end{align}
\label{lem:uniform_error_bound_with_guarantee}
Note that $\tau$ can be chosen arbitrarily small such that the effect of the constant $\gamma(\tau)$ can always be reduced to an amount which is negligible compared to $\sqrt{\beta(\tau)}\sigma_N(x)$.
\end{lemma}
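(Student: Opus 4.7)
The plan is to follow the approach of \citep{lederer2019uniform}, Theorem 3.3, which combines three ingredients: Lipschitz / modulus-of-continuity bounds for the posterior mean and standard deviation, a finite $\tau$-covering of $\mathcal{Z}$, and a union bound over Gaussian tail events evaluated at the covering points.

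First I would establish the Lipschitz constant $L_{\mu_g}$ of the posterior mean. Starting from $\mu_g(z) = k_*(z)^T (K + \sigma_{\text{noise}}^2 I_N)^{-1} y_N$, Cauchy-Schwarz gives $|\mu_g(z) - \mu_g(z')| \leq \|k_*(z) - k_*(z')\|_2 \cdot \|(K + \sigma_{\text{noise}}^2 I_N)^{-1} y_N\|_2$, and componentwise $L_k$-Lipschitzness of $k$ bounds the first factor by $L_k \sqrt{N}\|z - z'\|$. For the modulus of continuity $\omega_{\sigma N}$, I would reuse the manipulations of Lemma \ref{lem:bound_posterior_variance}: bound $|\sigma_g^2(z) - \sigma_g^2(z')|$ via Cauchy-Schwarz together with Lipschitz control of $k_*$, and then pass to $|\sigma_g(z) - \sigma_g(z')|$ using the elementary inequality $|\sigma_g(z) - \sigma_g(z')|^2 \leq |\sigma_g^2(z) - \sigma_g^2(z')|$ that already appears in that lemma's proof.

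Second I would discretize: pick a $\tau$-covering of $\mathcal{Z}$ of cardinality $M(\tau, \mathcal{Z})$. At each covering point $z_i$, the GP posterior of $g(z_i)$ is $\mathcal{N}(\mu_g(z_i), \sigma_g^2(z_i))$, so a standard Gaussian tail bound yields $P(|g(z_i) - \mu_g(z_i)| > \sqrt{\beta(\tau)}\sigma_g(z_i)) \leq 2 \exp(-\beta(\tau)/2)$. Setting $\beta(\tau) = 2\log(M(\tau, \mathcal{Z})/\delta)$ and taking a union bound over the $M(\tau, \mathcal{Z})$ covering points guarantees that the inequality holds simultaneously at every covering point with probability at least $1 - \delta$.

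Third I would transfer the pointwise bound to the full set. For any $z \in \mathcal{Z}$, let $z_i$ be a nearest covering point (so $\|z - z_i\| \leq \tau$). The triangle inequality yields
\begin{equation*}
|g(z) - \mu_g(z)| \leq |g(z) - g(z_i)| + |g(z_i) - \mu_g(z_i)| + |\mu_g(z_i) - \mu_g(z)|,
\end{equation*}
and combining the $L_g$-Lipschitzness of $g$, the covering-point concentration bound, the Lipschitz bound on $\mu_g$, and the modulus-of-continuity bound $\sigma_g(z_i) \leq \sigma_g(z) + \omega_{\sigma N}(\tau)$ yields $|g(z) - \mu_g(z)| \leq \sqrt{\beta(\tau)}\sigma_g(z) + (L_{\mu_g} + L_g)\tau + \sqrt{\beta(\tau)}\omega_{\sigma N}(\tau) = \sqrt{\beta(\tau)}\sigma_g(z) + \gamma(\tau)$, as required. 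The main obstacle will be the covering/concentration tradeoff: $M(\tau, \mathcal{Z})$ can grow rapidly as $\tau$ shrinks, which pushes $\beta(\tau)$ upward, so the delicate part is to balance $\tau$, $\beta(\tau)$, and $\omega_{\sigma N}(\tau)$ in such a way that $\gamma(\tau)$ remains negligible compared with $\sqrt{\beta(\tau)}\sigma_g(z)$.
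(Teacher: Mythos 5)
The paper offers no proof of this lemma: it is imported verbatim from \citep{lederer2019uniform} (Theorem 3.3), so there is nothing internal to compare against. Your sketch correctly reconstructs the argument of that reference --- Lipschitz and modulus-of-continuity bounds on the posterior mean and standard deviation, a union bound over a $\tau$-cover, and a triangle-inequality transfer to all of $\mathcal{Z}$ --- and the variance-modulus step is exactly the computation the paper reuses in its own Lemma \ref{lem:bound_posterior_variance}; the only nit is that the two-sided Gaussian tail should be bounded by $e^{-\beta(\tau)/2}$ (using $P(Z>c)\leq \tfrac{1}{2}e^{-c^2/2}$) rather than $2e^{-\beta(\tau)/2}$, so that the union bound over the $M(\tau,\mathcal{Z})$ cover points lands exactly at $\delta$ instead of $2\delta$.
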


\subsection{Lipschitz Constant for Safety Index}
\label{appendix: Lipschitz constant for safety index}
\begin{lemma}
\label{lem: L phi}
Denote the safety index design parameterization $\{n, k\}$, and denote $L_{d_x}$ and $L_{\dot{d}_x}$ as the 1-norm Lipschitz constant of $d$ and $\dot{d}$ with respect to $x$. The 1-norm Lipschitz constant for safety index is $L_{\phi} = \max\{nd_{min}^{n-1}, k\}(L_{d_x} + L_{\dot{d}_x})$ when $0 < n < 1$, and $L_{\phi} = \max\{nd_{max}^{n-1}, k\}(L_{d_x} + L_{\dot{d}_x})$ when $ n \geq 1$.
\end{lemma}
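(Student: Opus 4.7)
The plan is to expand $\phi(x) - \phi(x')$ directly from the parameterization $\phi = \sigma + d_{min}^n - d^n - k\dot{d}$, noting that the constants $\sigma$ and $d_{min}^n$ cancel. Applying the triangle inequality then reduces the task to bounding $|d(x)^n - d(x')^n|$ and $k|\dot{d}(x) - \dot{d}(x')|$ separately in terms of $\|x - x'\|_1$.

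The velocity term is immediate from the given Lipschitz constant: $k|\dot{d}(x) - \dot{d}(x')| \leq k L_{\dot{d}_x}\|x-x'\|_1$. For the distance term, I would apply the mean value theorem to the scalar map $t \mapsto t^n$: there exists $\xi$ between $d(x)$ and $d(x')$ with $|d(x)^n - d(x')^n| = n\xi^{n-1}|d(x) - d(x')|$, and then chain the Lipschitz bound $|d(x) - d(x')| \leq L_{d_x}\|x-x'\|_1$. The case split on $n$ arises from the monotonicity of $\xi \mapsto \xi^{n-1}$. When $n \geq 1$ this factor is nondecreasing in $\xi$, so its worst case is attained at the supremum $d_{max}$ of $d$ over the compact state space, giving the coefficient $n d_{max}^{n-1}$. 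When $0 < n < 1$ it is instead decreasing in $\xi$, so its worst case is attained at the infimum of $d$, which in the regime of interest equals $d_{min}$, giving the coefficient $n d_{min}^{n-1}$.

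Combining the two pieces yields $|\phi(x)-\phi(x')| \leq \big(n\alpha^{n-1} L_{d_x} + k L_{\dot{d}_x}\big)\|x-x'\|_1$ with $\alpha \in \{d_{max}, d_{min}\}$ selected by the case. The final step is the elementary observation $a L_{d_x} + b L_{\dot{d}_x} \leq \max\{a,b\}(L_{d_x} + L_{\dot{d}_x})$ applied with $a = n\alpha^{n-1}$ and $b = k$, which produces exactly the stated constant $L_\phi$. The only real subtlety is ensuring that $\xi^{n-1}$ is well defined in the $0 < n < 1$ case; this relies on $d$ being bounded away from zero, which is guaranteed by the compactness of the state space together with the interpretation of $d$ as a distance bounded below by the safety threshold $d_{min}$. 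Beyond that, all the estimates are standard and the proof is essentially a direct computation.
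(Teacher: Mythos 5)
Your argument is correct and essentially the same as the paper's: both hinge on the mean value theorem to control the $d^n$ term, the case split on the monotonicity of $\xi \mapsto n\xi^{n-1}$ over $[d_{min}, d_{max}]$, and chaining with the Lipschitz constants $L_{d_x}$ and $L_{\dot{d}_x}$ (the paper likewise restricts to $d \in [d_{min}, d_{max}]$, so your well-definedness caveat matches its setting). The only cosmetic difference is that you split the two terms by the triangle inequality and apply the scalar MVT to $t \mapsto t^n$, whereas the paper applies the multivariate MVT with H\"older's inequality to $\phi$ as a function of $(d,\dot{d})$; your route in fact yields the slightly tighter intermediate constant $n\alpha^{n-1}L_{d_x} + kL_{\dot{d}_x}$ before relaxing to the stated $L_\phi$.
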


\begin{proof}
To derive the 1-norm Lipschitz constant $L_\phi$ of $\phi$ with respect to $x$, we first derive the 1-norm Lipschitz constant $L_{\phi_{d, \dot{d}}}$ of $\phi$ with respect to $\{d, \dot{d}\}$. Since the safety index $\phi(d, \dot{d}) = \sigma + d^n_{min} - d^n - k \dot{d}$ is a continuous and differentiable function on a convex state space,
where $d \in [d_{min}, d_{max}]$ and $\dot d \in [\dot d_{min}, \dot d_{max}]$. For any $x, y \in \mathcal{D}$, we define 
$\Phi(c) = \phi((1 -c)x + cy)$. Since $\Phi(c)$ is a continuous function with respect to $c$, we have
\begin{align}
    \Phi(1) - \Phi(0) = \nabla_c\Phi(c)(1 - 0)
\end{align}
for some $c \in [0, 1]$ according to the mean value theorem~\citep{flett19582742}.

Since $\Phi(1) = \phi(y)$, and $\Phi(0) = \phi(x)$, the following equation holds 
\begin{align}
    \phi(y) - \phi(x) &= \nabla_c\Phi(c) \\ \nonumber
    & = \frac{\partial \phi((1 -c)x + cy)}{\partial ((1 -c)x + cy)} \frac{\partial ((1 -c)x + cy)}{\partial c} \\ \nonumber
    & = \nabla \phi(z) (y - x)
\end{align}
for some $z=(1 -c)x + cy)$ within the line segment between $x$ and $y$. According to Hölder's inequality~\citep{beckenbach1966holder}, the following condition holds: 
\begin{align}
    \| \phi(y) - \phi(x)\|_1 & \leq \| \nabla \phi(z)\|_{\infty}\| y -x\|_1 \\ \nonumber
    \frac{\| \phi(y) - \phi(x)\|_1}{\| y -x\|_1} &\leq \| \nabla \phi(z)\|_{\infty}
\end{align} 

Since $\nabla \phi = [-nd^{n-1}, -k]^T$, we have
\begin{align}
    \max \| \nabla \phi\|_\infty = \Big\| \begin{bmatrix} \max_d nd^{n-1} \\ k \end{bmatrix} \Big\|_\infty
\end{align}

Note that $\max_d nd^{n-1} = nd_{min}^{n-1}$ when $0 < n < 1$, and $\max_d nd^{n-1} = nd_{max}^{n-1}$ when $n \geq 1$. Therefore, by setting $L_{\phi_{d, \dot{d}}} = \|[nd_{min}^{n-1}, k]^T \|_{\infty} = \max\{nd_{min}^{n-1}, k\}$ when $0 < n < 1$, and $L_{\phi_{d, \dot{d}}} = \|[nd_{max}^{n-1}, k]^T \|_{\infty} = \max\{nd_{max}^{n-1}, k\}$ when $ n \geq 1$, the following condition holds: 
\begin{align}
    L_{\phi_{d, \dot{d}}} = \max \| \nabla \phi\|_\infty  \geq \|\nabla \phi(z)\|_{\infty} \geq \frac{\| \phi(y) - \phi(x)\|_1}{\| y -x\|_1}.
\end{align}
With the 1-norm Lipschitz constant $L_{\phi_{d, \dot{d}}}$ of $\phi$ with respect to $\{d, \dot{d}\}$, we can derive the 1-norm Lipschitz constant $L_{\phi}$ of $\phi$ with respect to $x$:
\begin{align}
\label{eq:lip_phi_x_derivation}
    & |\phi(x_1) - \phi(x_2)| \\ \nonumber 
     = &\; |\phi\big(d(x_1), \dot{d}(x_1)\big) - \phi\big(d(x_2), \dot{d}(x_2)\big)| \\ \nonumber
     \leq &\; L_{\phi_{d, \dot{d}}} |\big(d(x_1), \dot{d}(x_1)\big) - \big(d(x_2), \dot{d}(x_2)\big)| \\ \nonumber
     = &\; L_{\phi_{d, \dot{d}}} \big(|d(x_1)-d(x_2)| + |\dot{d}(x_1) - \dot{d}(x_2)|\big)\\ \nonumber
     \leq &\; L_{\phi_{d, \dot{d}}} L_{d_x} |x_1 - x_2| + L_\phi L_{\dot{d}_x} |x_1 - x_2| \\ \nonumber
    \leq &\; L_{\phi_{d, \dot{d}}} (L_{d_x} + L_{\dot{d}_x}) |x_1 - x_2|,
\end{align}
where denote $L_{d_x}$ and $L_{\dot{d}_x}$ as the 1-norm Lipschitz constant of $d$ and $\dot{d}$ with respect to $x$. As shown in \eqref{eq:lip_phi_x_derivation}, the lase inequality is essentially the definition of the 1-norm Lipschitz constant $L_\phi$ of $\phi$ with respect to $x$. Therefore we have:
\begin{align}
\label{eq:lip_phi_x}
    L_\phi = L_{\phi_{d, \dot{d}}} (L_{d_x} + L_{\dot{d}_x}).
\end{align}

\end{proof}

\subsection{Proof of Theorem \ref{theorem:main}}
\label{Proof of theorem:main}
\begin{proof}



Since $\mathbf{U}_f(f(x_\tau, u)) = \phi(\mu_f(x_\tau, u)) + L_\phi \beta_f \sigma_f(x_\tau, u)$,  the fundamental condition for the nonempty set of safe control is that  $\forall x_i \in \mathcal{X}_{\tau_x}$,
\begin{align}
\label{eq: fundamental theta}
    & \exists u \in \mathcal{U}, \text{ s.t. } \\ \nonumber
    & \phi_\theta(\mu_f(x_i, u)) + L_\phi \beta_f \sigma_f(x_i, u) \\ \nonumber 
    < & \max\{\phi_\theta(x_i)-\eta, 0\} - L_\phi L_f \tau_x - L_\phi \tau_x  - 2L_\phi \beta_f \tilde{\sigma}_f
\end{align}

where $\theta = \{\sigma, n, k\}$ is the tunable parameters of the safety index.

\paragraph{ \eqref{eq: fundamental theta}  is equivalent to \eqref{eq: lhs}:}

According to Lemma \ref{lem: L phi}, 
and the fact that $n = 1$ according to safety index design, we have
$L_\phi = \max\{1, k\}(L_{d_x} + L_{\dot{d}_x})$. Therefore, the LHS of \eqref{eq: fundamental theta} can be written as: 
\begin{align}
\label{eq: LHS of fundamental}
    &\phi_\theta(\mu_f(x_i, u)) + L_\phi \beta_f \sigma_f(x_i, u) \\ \nonumber
    = &\sigma + d_{min} - d\big(\mu_f(x_i, u)\big) - k\dot{d}\big(\mu_f(x_i, u)\big) \\ \nonumber
    & + \max\{1, k\}(L_{d_x} + L_{\dot{d}_x}) \beta \sigma_f(x_i, u)~,
\end{align}
where $d(\cdot)$ and $\dot{d}(\cdot)$ represent the mapping from state to $d$ and $\dot{d}$ respectively. Similarly, RHS of \eqref{eq: fundamental theta} can be written as: 
\begin{align}
\label{eq: rhs of fundamental}
    &\max\{\phi_\theta(x_i)-\eta, 0\} - L_\phi L_f \tau_x - L_\phi \tau_x - 2L_\phi \beta_f \tilde{\sigma}_f \\ \nonumber
    = & \max\{\sigma + d_{min} - d_i - k \dot d_i-\eta, 0\} \\ \nonumber
    &- \max\{1, k\}(L_{d_x} + L_{\dot{d}_x}) L_f \tau_x \\ \nonumber
    & - \max\{1, k\}(L_{d_x} + L_{\dot{d}_x}) \tau_x \\ \nonumber
    & - 2\max\{1, k\}(L_{d_x} + L_{\dot{d}_x})\beta_f \tilde{\sigma}_f, \\ \nonumber 
    \geq & \; \sigma + d_{min} - d_i - k \dot d_i-\eta \\ \nonumber
    &- \max\{1, k\}(L_{d_x} + L_{\dot{d}_x}) L_f \tau_x \\ \nonumber
    &- \max\{1, k\}(L_{d_x} + L_{\dot{d}_x}) \tau_x \\ \nonumber
    & - 2\max\{1, k\}(L_{d_x} + L_{\dot{d}_x})\beta_f \tilde{\sigma}_f,
\end{align}
where $d_i = d(x_i)$ and $\dot{d}_i = \dot{d}(x_i)$. 

Since $\max\{\sigma + d_{min} - d_i - k \dot d_i-\eta, 0\} \geq \sigma + d_{min} - d_i - k \dot d_i-\eta$, \eqref{eq: fundamental theta} can be proved if there exists $u$, such that RHS of \eqref{eq: rhs of fundamental} is larger than RHS of \eqref{eq: LHS of fundamental}, i.e. 
\begin{align}
\label{eq: lhs}
    &  d(\mu_f(x_i, u)) + k\dot{d}(\mu_f(x_i, u)) \\ \nonumber
    &- \max\{1, k\}(L_{d_x} + L_{\dot{d}_x}) \beta \sigma_f(x_i, u) \\ \nonumber
    & - d_i - k \dot d_i-\eta - \max\{1, k\}(L_{d_x} + L_{\dot{d}_x}) L_f \tau_x \\ \nonumber
    & - \max\{1, k\} (L_{d_x} + L_{\dot{d}_x})\tau_x \\ \nonumber
    & - 2\max\{1, k\}(L_{d_x} + L_{\dot{d}_x})\beta_f \tilde{\sigma}_f > 0.
\end{align}


\paragraph{\eqref{eq: lhs} is equivalent to \eqref{eq: concise lhs}:}

By choosing $u = u_{GP,i}$ from Gaussian process dataset \\
$\{\big( (x_{i}, u_{GP,i}) , f(x_{i}, u_{GP,i})\big)\}_{i=1}^{|\mathcal{X}_{\tau_x}|}$, we have the following three conditions hold: 
\begin{align}
    \sigma_f(x_i, u_{GP,i}) &= 0 \\
    d\big(\mu_f(x_i, u_{GP,i})\big) &= d_{GP,i},\\
    \dot{d}\big(\mu_f(x_i, u_{GP,i})\big) &= \dot{d}_{GP,i},
\end{align}
where $d_{GP,i} = d(f(x_i, u_{GP,i}))$ and $\dot{d}_{GP,i} = \dot{d}(f(x_i, u_{GP,i}))$.
Therefore, by selecting $u = u_{GP,i}$ and condition \eqref{eq: lhs} becomes: 
\begin{align}
\label{eq: concise lhs}
    & k ( \dot d_{GP,i} - \dot d_i ) - (d_i - d_{GP,i}) -\eta \\ \nonumber
    &- \max\{1, k\}(L_{d_x} + L_{\dot{d}_x})L_f \tau_x \\ \nonumber
    &- \max\{1, k\}(L_{d_x} + L_{\dot{d}_x}) \tau_x \\ \nonumber
    & - 2\max\{1, k\}(L_{d_x} + L_{\dot{d}_x})\beta_f n_x \\ \nonumber
    &\cdot \sqrt{2L_k \tau_x +  2|\mathcal{X}_{\tau_x}| L_k \tau_x \|K^{-1} \| \max_{w, w' \in \mathcal{W}}k(w, w')} > 0. 
\end{align}


So far, we have shown that it is sufficient to prove \eqref{eq: concise lhs} holds in order to show \eqref{eq: fundamental theta} holds. Next, we will show that \eqref{eq: concise lhs} holds, given the discretization step size and safety index design.

\paragraph{ \eqref{eq: the choice of tau} and \eqref{eq: the choice of safety index} ensure \eqref{eq: concise lhs} holds:}
Since the state discretization size is selected according to \eqref{eq: the choice of tau}, by rearranging \eqref{eq: the choice of tau}, the following condition holds:
\begin{align}
\label{feasible_condition_for_k_when_k>=1_(3)}
& \frac{\inf_{x}\sup_{u}\Delta\dot{d}(x,u)}{2} - (L_{d_x} + L_{\dot{d}_x}) \sqrt{\tau_x} - \\ \nonumber 
&(L_{d_x} + L_{\dot{d}_x}) L_f \sqrt{\tau_x} -\bigg(2\sqrt{2L_k}(L_{d_x} + L_{\dot{d}_x})\beta_f n_x \\ \nonumber 
& \cdot \sqrt{1 +  |\mathcal{X}_{\tau_x}| \|K^{-1} \| \max_{z, z' \in \mathcal{Z}}k(z, z')}\bigg) \sqrt{\tau_x} > 0.
\end{align}

Since the Gaussian process dataset satisfies
\begin{align}
  \forall x_i, \dot d_{GP,i} - \dot d_i > \frac{\inf_{x}\sup_{u}\Delta\dot{d}(x,u)}{2},
\end{align}

Hence,
\begin{align}
\label{feasible_condition_for_k_when_k>=1_(2)}
& \dot d_{GP,i} - \dot d_i - (L_{d_x} + L_{\dot{d}_x}) \sqrt{\tau_x} \\ \nonumber 
& - (L_{d_x} + L_{\dot{d}_x}) L_f \sqrt{\tau_x} \\ \nonumber 
& -\bigg(2\sqrt{2L_k}(L_{d_x} + L_{\dot{d}_x})\beta_f n_x \\ \nonumber
& \cdot \sqrt{1 +  |\mathcal{X}_{\tau_x}| \|K^{-1} \| \max_{w, w' \in \mathcal{W}}k(w, w')}\bigg) \sqrt{\tau_x} > 0.
\end{align}

Given the fact that $\tau_x \leq 1$ (i.e., $\tau_x<
\sqrt{\tau_x}$) according to \eqref{eq: the choice of tau}, \eqref{feasible_condition_for_k_when_k>=1_(2)} indicates:
\begin{align}
\label{feasible_condition_for_k_when_k>=1}
\Omega = & \; \dot d_{GP,i} - \dot d_i - (L_{d_x} + L_{\dot{d}_x})\tau_x - (L_{d_x} + L_{\dot{d}_x}) L_f \tau_x \\ \nonumber 
& -2(L_{d_x} + L_{\dot{d}_x})\beta_f n_x \\ \nonumber 
& \cdot \sqrt{2L_k \tau_x +  2|\mathcal{X}_{\tau_x}| L_k \tau_x \|K^{-1} \| \max_{w, w' \in \mathcal{W}}k(w, w')} \\ \nonumber 
> &\; 0.
\end{align}

According to \eqref{eq: the choice of safety index}, 
denoting the LHS of \eqref{feasible_condition_for_k_when_k>=1} as $\Omega$, we have
\begin{equation}
\label{eq: for a single tau k}
k > 1, \text{ and } k > \frac{\eta + d_i - d_{GP,i}}{\Omega}
\end{equation}

With \eqref{eq: for a single tau k}, the LHS of \eqref{eq: concise lhs} becomes:
\begin{align}
    & k \Omega -\eta - (d_i - d_{GP,i}) \\ \nonumber 
    > & \frac{\eta + d_i - d_{GP,i}}{\Omega}  \Omega -\eta - (d_i - d_{GP,i}) \\ \nonumber 
    = &0,
\end{align}

which indicates \eqref{eq: concise lhs} holds. 

Hence, \eqref{eq: concise lhs} indicates \eqref{eq: lhs} holds, which further indicates \eqref{eq: fundamental theta} holds for every $x_i \in \mathcal{X}_\tau$.

\end{proof}

\subsection{Infimum of Supremum Safety Status Change}
\label{sec: inf sup safety status}

It is worth noting that the system property $\inf_x\sup_u\Delta \dot d(x,u)$ is crucial for establishing the nonempty set of safe control theorem as indicated in \eqref{eq: the choice of tau} and \eqref{eq: the choice dataset}. In fact, if little prior knowledge of the unknown dynamics function $f(\cdot)$ is given, it is intractable to derive the specific value of $\inf_x\sup_u\Delta \dot d(x,u)$. However, under a mild assumption regarding the Lipschitz continuity of $\Delta \dot d(x,u)$,  
we can derive the lower bound of $\inf_x\sup_u\Delta \dot d(x,u)$, which is summarized in the following proposition.

\begin{theorem}
\label{theo: probabilistic infimum of supremum}
Under \Cref{asm:safe_control}.
Consider a state-space $\tilde{\tau}$-discretization
$\mathcal{X}_{\tilde{\tau}}$ defined in Definition \ref{definition:discretization}.

For each discretized state $x_i$, 
perform grid sampling by iteratively increasing sampling resolution to find a control ${u_{safe}}_i$, s.t. $\Delta \dot d(x_i, {u_{safe}}_i) > 0$. 

Denote $L_{\Delta\dot d}$ as the Lipschitz constant for function $\Delta \dot d(x, u): \mathcal{U} \rightarrow \mathbb{R}$. Then,
the system property $\inf_x \sup_u \Delta \dot d(x,u)$ is lower bounded by:
\begin{align}
\label{eq: the inf and sup of delta d}
\inf_x \sup_u \Delta \dot d(x,u) \geq \min  \{ \underline{\sup \Delta \dot d_1}, \underline{\sup \Delta \dot d_2} \cdots, \underline{\sup \Delta \dot{d}_{|\mathcal{X}_{\tilde{\tau}}|}
} \} - L_{\Delta \dot d} \tilde{\tau}
\end{align}
where $\underline{\sup \Delta \dot d_{i}} = \Delta \dot d(x_i, {u_{safe}}_i)$.
\end{theorem}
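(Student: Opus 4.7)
The plan is to use a discretization-and-Lipschitz argument, very similar in spirit to the one used in Proposition \ref{proposition:safe_control_discretization}. The key observation is that $\inf_x \sup_u \Delta \dot d(x,u)$ is a two-level quantity, so I only need to produce, for each $x \in \mathcal{X}$, a \emph{single} control that makes $\Delta \dot d(x,u)$ sufficiently large; the sampled controls ${u_{safe}}_i$ supply exactly such witnesses on the discretized set, and Lipschitz continuity in $x$ transports the witness to nearby states.

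First, I would fix an arbitrary $x \in \mathcal{X}$ and, using Definition \ref{definition:discretization}, pick the closest discretized state $x_i \in \mathcal{X}_{\tilde{\tau}}$, so that $\|x - x_i\|_1 \leq \tilde{\tau}$. Since ${u_{safe}}_i \in \mathcal{U}$ is a valid (though non-optimal) control at state $x$, I get the obvious lower bound
\begin{equation*}
\sup_{u \in \mathcal{U}} \Delta \dot d(x, u) \;\geq\; \Delta \dot d(x, {u_{safe}}_i).
\end{equation*}
Applying the Lipschitz property of $\Delta \dot d(\cdot, {u_{safe}}_i)$ with constant $L_{\Delta \dot d}$ then gives
\begin{equation*}
\Delta \dot d(x, {u_{safe}}_i) \;\geq\; \Delta \dot d(x_i, {u_{safe}}_i) - L_{\Delta \dot d}\|x - x_i\|_1 \;\geq\; \underline{\sup \Delta \dot d_i} - L_{\Delta \dot d}\tilde{\tau}.
\end{equation*}
Since $\underline{\sup \Delta \dot d_i} \geq \min_{j} \underline{\sup \Delta \dot d_j}$ by definition, chaining these inequalities and then taking the infimum over $x \in \mathcal{X}$ yields the claimed bound \eqref{eq: the inf and sup of delta d}.

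The main subtlety, and the part I would spend the most care on in the proof, is justifying that the grid-sampling procedure actually terminates and returns a control ${u_{safe}}_i$ with $\Delta \dot d(x_i, {u_{safe}}_i) > 0$ for every $x_i \in \mathcal{X}_{\tilde{\tau}}$. This is where Assumption \ref{asm:safe_control} is essential: it guarantees $\sup_u \Delta \dot d(x_i, u) \geq \inf_x \sup_u \Delta \dot d(x,u) > 0$, so a strictly positive value is attainable. Combining this with Lipschitz continuity of $\Delta \dot d(x_i, \cdot)$ in $u$ on the compact set $\mathcal{U}$, any control within a sufficiently small neighborhood of an optimizer must still yield a strictly positive value; hence iteratively refining the grid sampling resolution is guaranteed to produce such a ${u_{safe}}_i$ in finitely many rounds. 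Once this well-posedness is established, the Lipschitz transport argument above is a short calculation and closes the theorem.
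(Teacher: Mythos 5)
Your proposal is correct and follows essentially the same discretization-plus-Lipschitz argument as the paper, with the minor (and slightly cleaner) simplification that you use the sampled control ${u_{safe}}_i$ directly as the witness at $x$, whereas the paper first passes through the true maximizer $u^{\text{near}} = \argmax_u \Delta\dot d(x^{\text{near}},u)$ and then lower-bounds $\sup\Delta\dot d_i$ by $\underline{\sup\Delta\dot d_i}$. Your termination argument for the grid sampling likewise mirrors the paper's Lemma~\ref{lem:finiteness}.
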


The proof for \Cref{theo: probabilistic infimum of supremum} is summarized in \Cref{proof for theo: probabilistic infimum of supremum}, where we use (i) grid sampling and (ii) Lipstchiz continuity to derive a lower bound for the maximum value of the unknown function (in our case $\Delta \dot{d}(\cdot)$).


\Cref{theo: probabilistic infimum of supremum} states that we first discretize the continuous state space $\mathcal{X}$ into discretized state space $\mathcal{X}_{\tilde{\tau}}$. Then, for each discretized state $x_i$, we use grid sampling by iteratively increasing the sampling resolution to find a control such that safety status change is positive, which results a lower bound of $\sup_u \Delta \dot d(x_{i}, u)$ for  each discretized state. In Lemma \ref{lem:finiteness}, we show the grid sampling procedure can be finished within finite many iterations for each discretized state.


Finally, with the Lipschitz constant of $\Delta \dot d (x, u)$, the lower bound of $\inf_x\sup_u\Delta \dot d(x,u)$ on the continuous state space $\mathcal{X}$ can be established according to \eqref{eq: the inf and sup of delta d}. Note that all the values required in \eqref{eq: the inf and sup of delta d} can be directly computed, where the Lipschitz constant $L_{\Delta\dot d}$ of an unknown function $\Delta \dot d(\cdot, \cdot)$ can be computed analytically via GP~\citep{lederer2019uniform}. 
Therefore, the lower bound of $\inf_x\sup_u\Delta \dot d(x,u)$ obtained with \Cref{theo: probabilistic infimum of supremum} can be directly used in \Cref{theo: nonempty set of safe control} for guaranteeing nonempty set of safe control in practical applications.

\subsection{Finite Sampling for Safety Status Change}
\label{sec: Probabilistic Infimum of Supremum for Gaussian Process}

\begin{lemma}[Existence]
Under \Cref{asm:safe_control}, then for each discretized state $x_i$, we can find a ${u_{safe}}_i$, s.t. $\Delta \dot d(x_i, {u_{safe}}_i) > 0$, with finite many iterations.
\label{lem:finiteness}
\end{lemma}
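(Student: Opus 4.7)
The plan is to combine Assumption \ref{asm:safe_control}, the Lipschitz continuity of $\Delta\dot d$ in $u$, and the boundedness of $\mathcal{U}$ to show that a sufficiently fine grid on the control space must contain a point at which $\Delta\dot d(x_i,\cdot)$ is positive, and that such a resolution is reached after a finite number of refinement rounds.

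First, I would extract a positive lower bound on the supremum at the fixed discretized state. By Assumption \ref{asm:safe_control}, $c^{\ast} := \inf_x \sup_u \Delta\dot d(x,u) > 0$, so $\sup_{u\in\mathcal{U}} \Delta\dot d(x_i,u) \geq c^{\ast}$. By the definition of the supremum, there exists $u^{\ast} \in \mathcal{U}$ with $\Delta\dot d(x_i,u^{\ast}) > c^{\ast}/2$ (attainment is not required, since we only need a near-maximizer).

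Second, I would localize this strict inequality using Lipschitz continuity. Since $\Delta\dot d(x_i,\cdot)$ is $L_{\Delta\dot d}$-Lipschitz with respect to the $1$-norm, every $u$ with $\|u - u^{\ast}\|_1 \leq \rho := c^{\ast}/(4 L_{\Delta\dot d})$ satisfies $\Delta\dot d(x_i,u) \geq \Delta\dot d(x_i,u^{\ast}) - L_{\Delta\dot d}\|u-u^{\ast}\|_1 > c^{\ast}/4 > 0$. Because $\mathcal{U}$ is bounded, a grid of resolution $\tau_u \leq \rho$ forms a $\rho$-cover of $\mathcal{U}$, hence must contain a sample $\tilde u$ within the $\rho$-ball around $u^{\ast}$. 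Such a $\tilde u$ serves as the desired ${u_{safe}}_i$. Finally, the iterative refinement of $\tau_u$ (for instance halving) starting from any initial $\tau_u^{(0)}$ reaches the threshold $\rho$ in at most $\lceil \log_2(\tau_u^{(0)}/\rho) \rceil$ rounds, each of which queries only finitely many grid points since $\mathcal{U}$ is bounded.

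The main obstacle is cosmetic rather than substantive: one must take $u^{\ast}$ as a near-maximizer rather than an exact maximizer, since Assumption \ref{asm:safe_control} only posits that the infimum of the supremum is positive and does not guarantee the supremum is attained on $\mathcal{U}$. The remaining argument is a direct combination of the supremum characterization, the Lipschitz estimate, and the finiteness of a grid over a bounded set.
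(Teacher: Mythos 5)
Your proof is correct and follows essentially the same route as the paper's: both arguments use \Cref{asm:safe_control} together with the Lipschitz continuity of $\Delta\dot d$ in $u$ to exhibit a region of positive size in $\mathcal{U}$ on which $\Delta\dot d(x_i,\cdot)$ is positive, and then observe that a sufficiently fine grid over the bounded control space must hit it after finitely many refinements. If anything, your version is slightly tighter: you make the radius $\rho = c^{\ast}/(4L_{\Delta\dot d})$ explicit and obtain the strict inequality $\Delta\dot d > 0$ directly, whereas the paper only asserts the existence of a hypercube on which $\Delta\dot d \geq 0$ without quantifying its size.
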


\begin{proof}

Under \Cref{asm:safe_control}, we have the infimum of the supremum of $\Delta \dot d$ can achieve positive, i.e., $\inf_x\sup_u\Delta \dot d(x,u) > 0$. Since $\Delta \dot d(x,u)$ is a Lipschitz continuous function, we have the existence of a non-trivial set of safe control $\mathcal{U}^S_i$
for each discretized state $x_i$, such that
\begin{align}
    \forall u \in \mathcal{U}^S_i, \Delta \dot d(x_i,u) \geq 0.
\end{align}

Hence, the following condition holds

\begin{equation}\label{eq:nontrivial}
    \exists \mathcal{Q} \subset \mathcal{U}^S_i, \exists \kappa >0, \text{\ s.t. \ }  s > \kappa~,
\end{equation}
where $\mathcal{Q}$ is a $n_u$-dimensional hypercube with the same length of $s$. Denote $\zeta_{[i]} = \max_{j,k}\| u_{[i]}^j -  u_{[i]}^k \|$, where $u_{[i]}$ denotes the $i$-th dimension of control $u$, and $u^j \in \mathcal{U}^S_i,  u^k \in \mathcal{U}^S_i$. 

By directly applying grid sampling in $\mathcal{U}^S_i$ with sample interval $s^*$ at each control dimension, such that $ s^* < s$. The maximum sampling time $T^a$ for finding a control ${u_{safe}}_i$ in \Cref{theo: probabilistic infimum of supremum} satisfies the following condition:
\begin{equation}\label{eq:max_sample}
    T^a < \prod\limits_{i=1}^{n_u} \lceil\frac{\zeta_{[i]}}{s^*}\rceil~,
\end{equation}
where $T^a$ is a finite number due to infimum condition of $s$ in \eqref{eq:nontrivial}. Then we have proved that for each $x_i \in \mathcal{X}_{\tilde{\tau}}$, we can find a control ${u_{safe}}_i$, s.t. $\Delta \dot d(x_i, {u_{safe}}_i) > 0$, with finite many iterations with finite iteration (i.e. finite sampling time).

\end{proof}

\subsection{Proof for Theorem \ref{theo: probabilistic infimum of supremum}}
\label{proof for theo: probabilistic infimum of supremum}
\begin{proof}
With discretized state space $\mathcal{X}_{\tilde{\tau}}$, the following inequality holds:
\begin{align}
\label{eq: max ineq}
    \forall x\in \mathcal{X}, \sup_{u \in \mathcal{U}} \Delta \dot d(x,u) \geq \Delta \dot d(x,u^{\text{near}})
\end{align}
where $u^{\text{near}} = \argmax_{u \in \mathcal{U}}\Delta \dot d(x^{\text{near}}, u)$, and $x^{\text{near}} \in \mathcal{X}_{\tilde{\tau}}$ such that $\| x - x^{\text{near}}\|_1 \leq \tilde{\tau}$. 
According to \eqref{eq: max ineq}, we have:
\begin{align}
    \forall x\in \mathcal{X}, &\sup_{u \in \mathcal{U}} \Delta \dot d(x,u) \\ \nonumber 
    &\geq \Delta \dot d(x^{\text{near}},u^{\text{near}}) + \Delta \dot d(x,u^{\text{near}}) \\ \nonumber 
    & \quad - \Delta \dot d(x^{\text{near}},u^{\text{near}}) \\ \nonumber 
    &\geq \Delta \dot d(x^{\text{near}},u^{\text{near}}) - L_{\Delta \dot d} \tilde{\tau} \\ \nonumber 
    &\geq \min \{\sup \Delta \dot d_{1}, \sup \Delta \dot d_{2}, \cdots, \sup \Delta \dot d_{|\mathcal{X}_{\tilde{\tau}}|}\} \\ \nonumber
    & \quad - L_{\Delta \dot d} \tilde{\tau}
\end{align}

where $\sup \Delta \dot d_i$ denotes the supremum $\Delta \dot d$ for $i$-th discretized state $x_i$. Therefore, the infimum of the supremum $\Delta \dot d$ is lower bounded by:
\begin{align}
\label{eq: before final conclusion}
    & \inf_x \sup_u \Delta \dot d(x,u) \\ \nonumber
    & \geq \min \{\sup \Delta \dot d_{1}, \sup \Delta \dot d_{2}, \cdots, \sup \Delta \dot d_{|\mathcal{X}_{\tilde{\tau}}|}\} - L_{\Delta \dot d} \tilde{\tau}.
\end{align}

According to Lemma \ref{lem:finiteness}, for each discretized state $x_i$, we can find a ${u_{safe}}_i$, s.t. $\Delta \dot d(x_i, {u_{safe}}_i) > 0$. Hence, the $\sup \Delta \dot d_i$ is bounded by
\begin{align}
\label{eq: sup lower bound}
    \sup \Delta \dot d_{i} > \Delta \dot d(x_i, {u_{safe}}_i) = \underline{\sup \Delta \dot d_{i}}.
\end{align}

By plugging \eqref{eq: sup lower bound} into \eqref{eq: before final conclusion}, 
\begin{align}
\label{eq: finally}
    & \inf_x \sup_u \Delta \dot d(x,u) \\ \nonumber
    & \geq \min  \{ \underline{\sup \Delta \dot d_1}, \underline{\sup \Delta \dot d_2} \cdots, \underline{\sup \Delta \dot{d}_{|\mathcal{X}_{\tilde{\tau}}|}
} \} - L_{\Delta \dot d} \tilde{\tau}.
\end{align}

\end{proof}


\subsection{Necessity of Grid Discretization}
\label{sec: necessity of grid}
\begin{lemma}
    \label{lem: grid necessity}
    The grid-based discretization of state space is necessary to guarantee nonempty set of safe control for all possible state.
\end{lemma}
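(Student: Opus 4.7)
The plan is to establish necessity by contradiction, leveraging the crucial role that the $\tau_x$-covering condition of Definition \ref{definition:discretization} plays throughout the proofs of Proposition \ref{proposition:safe_control_discretization} and \Cref{theo: nonempty set of safe control}. Specifically, I would show that if the offline sampling strategy fails to produce such a covering of $\mathcal{X}$, then one can exhibit a state at which the probabilistic safety condition \eqref{eq: upper condition} cannot be certified for any admissible control.

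First, I would negate the grid-based hypothesis in a clean way: if the dataset does not underlie a $\tau_x$-discretization satisfying \eqref{eq: the choice of tau}, then some $x^\star \in \mathcal{X}$ must satisfy $\|x^\star - x_i\|_1 > \tau_x$ for every sampled $x_i$. Second, I would revisit the derivation of Lemma \ref{lem:bound_posterior_variance}: the uniform variance bound $\tilde{\sigma}_f$ is obtained from $\|z_* - z_i\|_1 \leq \tau$ combined with the kernel Lipschitz property in \eqref{eq:variance_bound_5}--\eqref{eq:variance_bound_7}. At $x^\star$ no such nearby $z_i$ exists, so $\sigma_f(x^\star,u)$ is bounded below by a strictly positive constant determined by the minimum separation and the kernel magnitude, uniformly in $u \in \mathcal{U}$. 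Third, this inflates $\mathbf{U}_f(x^\star,u) = \phi(\mu_f(x^\star,u)) + L_\phi \beta_f \sigma_f(x^\star,u)$ sufficiently that it cannot be driven below $\max\{\phi(x^\star)-\eta, 0\}$ for any control, since the variance contribution is control-independent and lower bounded away from zero. Hence the nonempty set of safe control guarantee fails at $x^\star$, contradicting \Cref{theo: nonempty set of safe control} and closing the argument.

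The hard part will be aligning the phrase ``grid-based discretization'' (which suggests the regular lattice used in \Cref{alg:uaissa}) with the abstract $\tau$-covering property that the argument actually exploits. I would resolve this by noting that for compact $\mathcal{X}$, any deterministic $\tau$-covering can, without loss of generality, be refined into a uniform grid inscribed in $\mathcal{X}$ up to a constant factor in spacing, so necessity of the covering transfers to necessity of a grid-based construction. A secondary subtlety is that randomized or adaptive sampling schemes attain $\tau$-covering only with high probability over the sampling draw, whereas \Cref{theo: nonempty set of safe control} demands deterministic coverage of \emph{every} $x \in \mathcal{X}$; the uncovered states left by any such stochastic scheme supply precisely the $x^\star$ needed above, so the same contradiction rules them out as well.
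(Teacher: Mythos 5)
There is a genuine gap in your argument, and it sits exactly at the step you rely on most heavily. You claim that at an uncovered state $x^\star$ the posterior standard deviation $\sigma_f(x^\star,u)$ is \emph{lower} bounded away from zero, ``determined by the minimum separation and the kernel magnitude.'' Nothing in the paper supports this: Lemma \ref{lem:bound_posterior_variance} gives only an \emph{upper} bound on the posterior variance in terms of the distance to the nearest training input, and the failure of that bound's hypothesis at $x^\star$ does not yield a lower bound. For kernels whose induced feature space is finite-dimensional (linear, polynomial), the posterior variance can be exactly zero at points arbitrarily far from the data, so your $x^\star$ need not have inflated uncertainty at all. Even granting a uniform lower bound $\sigma_f(x^\star,u)\geq c>0$, your third step does not close: $\mathbf{U}_f(x^\star,u)=\phi(\mu_f(x^\star,u))+L_\phi\beta_f\sigma_f(x^\star,u)$, and you never control $\phi(\mu_f(x^\star,u))$ from below, so for some $u$ the mean term could be negative enough to push $\mathbf{U}_f$ under the threshold $\max\{\phi(x^\star)-\eta,0\}$ despite the variance penalty. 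Hence the claimed contradiction with \Cref{theo: nonempty set of safe control} is not established.

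The paper's own proof runs in the opposite direction and avoids both problems. It takes an arbitrary $x$, lets $x_{\tau_c}$ be its closest data sample at distance $\tau_c$, and \emph{upper} bounds $\mathbf{U}_f(x,u_{\tau_c})$ by $L_\phi L_f\tau_c+\mathbf{U}_f(x_{\tau_c},u_{\tau_c})+2L_\phi\beta_f\tilde\sigma_{max}$, exactly as in the proof of Proposition \ref{proposition:safe_control_discretization}. Propagating this through the safety-index algebra of \Cref{Proof of theorem:main} yields condition \eqref{eq: final nece}, whose left-hand side decreases linearly in $\tau_c$; it can therefore only be satisfied if $\tau_c$ is bounded above. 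The conclusion is that the certification scheme requires every state to have a data sample within a bounded $1$-norm radius, i.e., the dataset must form a $\tau$-covering. If you want to salvage your contrapositive route, you would need to (i) restrict to kernels for which distance from the data provably implies a positive variance floor (e.g., stationary kernels with $k_*(z^\star)\to 0$), and (ii) add a lower bound on $\inf_u\phi(\mu_f(x^\star,u))$ so that the variance floor actually dominates; neither ingredient is currently in your sketch or in the paper.
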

\begin{proof}


For any $x\in\mathcal{X}$, the fundamental condition for nonempty set of safe control is 
\begin{align}
\label{eq: noempty nece}
    \exists u, \mathbf{U}(x, u) < \max \{\phi(x)-\eta, 0\}
\end{align}

We denote $(x_{\tau_c}, u_{\tau_c})$ as the closest state data sample from dataset, where $\|x - x_{\tau_c}\|_1 = \tau_c$. Follow the similar derivation from \eqref{eq:safe_control_discretization_proof_1} and \eqref{eq:safe_control_discretization_proof_2}, the following inequality holds with probability at least $1-\delta$:

\begin{equation}\label{eq:safe_control_discretization_proof_nece}
\begin{split}
    \mathbf{U}(x ,u_{\tau_c}) 
    \leq & \; L_\phi L_f \tau_c + \mathbf{U}_f(x_{\tau_c}, u_{\tau_c}) + 2L_\phi \beta_f \tilde\sigma_{max}, \\ \nonumber 
\end{split}
\end{equation}
where $\tilde\sigma_{max}$ denotes the maximum posterior variance. According to \eqref{eq:phi_tau}, $\max \{\phi(x_{\tau_c})-\eta, 0\} - L_\phi \tau_{c} < \max \{\phi(x)-\eta, 0\} $. Hence, to ensure \eqref{eq: noempty nece} holds, we should show the following condition holds
\begin{align}
     L_\phi L_f \tau_c + \mathbf{U}_f(x_{\tau_c}, u_{\tau_c}) + 2L_\phi \beta_f \tilde\sigma_{max} < \max \{\phi(x_{\tau_c})-\eta, 0\} - L_\phi \tau_{c}
\end{align}
yielding 
\begin{align}
\label{eq: nece 1}
    \mathbf{U}_f(x_{\tau_c}, u_{\tau_c}) < \max \{\phi(x_{\tau_c})-\eta, 0\} - L_\phi L_f \tau_c - 2L_\phi \beta_f \tilde\sigma_{max} - L_\phi \tau_{c}.
\end{align}

So far we have shown that, to guarantee nonempty set of safe control on any state, it is necessary to validate the condition \eqref{eq: nece 1} on its closest state data sample. Following the similar derivation in \Cref{Proof of theorem:main}, the equivalent condition for \eqref{eq: nece 1} is
\begin{align}
\label{eq: final nece}
    & k ( \dot d_{GP,i} - \dot d_i ) - (d_i - d_{GP,i}) -\eta \\ \nonumber
    &- \max\{1, k\}(L_{d_x} + L_{\dot{d}_x})L_f \tau_c \\ \nonumber
    &- \max\{1, k\}(L_{d_x} + L_{\dot{d}_x}) \tau_c \\ \nonumber
    & - 2\max\{1, k\}(L_{d_x} + L_{\dot{d}_x})\beta_f \tilde\sigma_{max} > 0,
\end{align}
which indicates the $\tau_c$ should be upper bounded. 

Therefore, to ensure nonempty set of safe control for any $x \in \mathcal{X}$, it is necessary to acquire a data sample $x_{\tau_c}$ such that $\tau_c = \|x - x_{\tau_c}\|_1$ and $\tau_c$ is upper bounded, indicating the necessity of grid-based discretization of state space.
\end{proof}

\section{Experiment Details}
\begin{figure}[t]
    \centering
    \subfigure[Point robot]{\includegraphics[width=30mm]{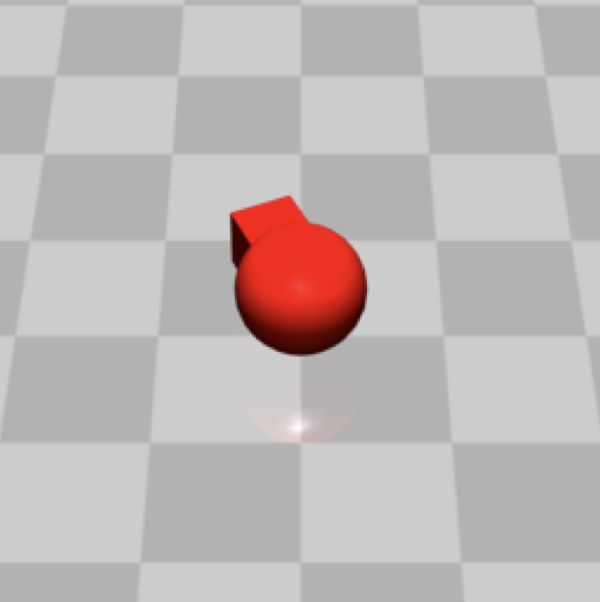}\label{fig:sg_point}}
    \hfill
    \subfigure[Hazard]{\includegraphics[width=30mm]{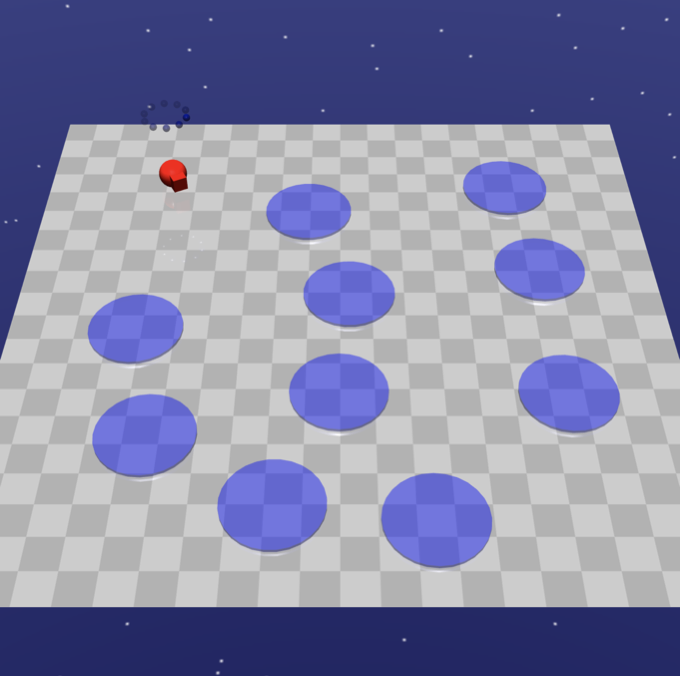}\label{fig:sg_hazard}}
    \hfill
    \subfigure[Goal task]{\includegraphics[width=30mm]{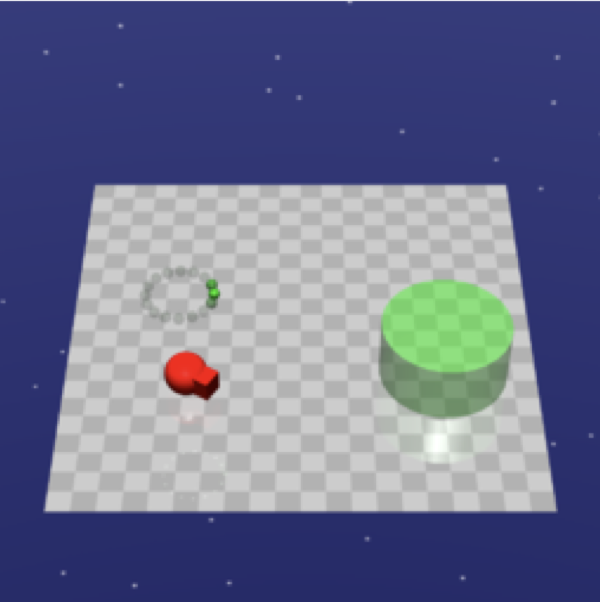}\label{fig:sg_goal}}
    \hfill
    \subfigure[Push task]{\includegraphics[width=30mm]{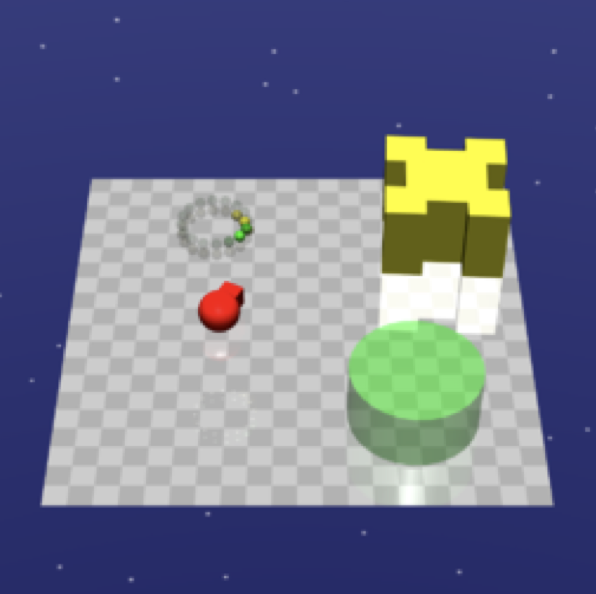}\label{fig:sg_push}}
    \centering
    \caption{Robots, obstacles and tasks in Safety Gym.}
    \label{fig:settings}
\end{figure}
\label{appendix: experiment details}

\subsection{Elements of Safety Gym}

Some key elements of the environment are listed below.
\begin{itemize}
    \item \texttt{Point Robot}: A 2D robot that can turn and move as shown in~\Cref{fig:sg_point}, with one actuator for turning and another for moving forward/backwards.
    \item \texttt{Hazard}: Dangerous areas to avoid as shown in~\Cref{fig:sg_hazard}. These are circles on the ground that are non-physical, and the robot is penalized for entering them.
    \item \texttt{Goal Task}: Move the robot to a series of goal positions as shown in \Cref{fig:sg_goal}. 
    \item \texttt{Push Task}: Thr robot needs to push the yellow box inside the green goal area as shown in \Cref{fig:sg_push}. 
\end{itemize}
\paragraph{State Space}
The state space is composed of various physical quantities from standard robot sensors (accelerometer, gyroscope, magnetometer, and velocimeter) and lidar (where each lidar sensor perceives objects of a single kind). The state spaces of all the test suites are summarized in \Cref{tab:state_space} (\Cref{appendix: experiment details}). In safety gym, the observation contains lidar information, which is difficult to predict. Thus, we only select accelerometer, gyroscope, magnetometer and velocimeter as state for GP prediction.

\paragraph{Control Space}
For all the experiments, the control space $\mathcal{U} \subset \mathbb{R}^2$. The first dimension $u_1 \in [-10, 10]$ is the control space of moving actuator, and second dimension $u_2 \in [-10, 10]$ is the control space of turning actuator. 

\subsection{Environment Settings}
\paragraph{Goal Task}
In the Goal task environments, the reward function is:
\begin{equation}\notag
\begin{split}
    & r(x_t) = d^{g}_{t-1} - d^{g}_{t} + \mathbbm{1}[d^g_t < R^g]~,\\
\end{split}
\end{equation}
where $d^g_t$ is the distance from the robot to its closest goal and $R^g$ is the size (radius) of the goal. When a goal is achieved, the goal location is randomly reset to someplace new while keeping the rest of the layout the same.

\paragraph{Push Task}
In the Push task environments, the reward function is:
\begin{equation}\notag
\begin{split}
    & r(x_t) = d^{r}_{t-1} - d^{r}_{t} + d^{b}_{t-1} - d^{b}_{t} + \mathbbm{1}[d^b_t < R^g]~,\\
\end{split}
\end{equation}
where $d^r$ and $d^b$ are the distance from the robot to its closest goal and the distance from the box to its closest goal, and $R^g$ is the size (radius) of the goal. The box size is $0.2$ for all the Push task environments. Like the goal task, a new goal location is drawn each time a goal is achieved. 


\paragraph{Hazard Constraint}
In the Hazard constraint environments, the cost function is:
\begin{equation}\notag
\begin{split}
    & c(x_t) = \max(0, R^h - d^h_t)~,\\
\end{split}
\end{equation}
where $d^h_t$ is the distance to the closest hazard and $R^h$ is the size (radius) of the hazard. We adopt two different sizes (0.15 and 0.30) of the hazard for evalution as shown in \Cref{fig:goal_and_push}.

\subsection{State Space Components}
The components of state space of the two environments (Goal-Hazard and Push-Hazard) in Safe Gym are shown in
\Cref{tab:state_space}.

\begin{table*}[t]
\vskip 0.15in
\begin{center}
\begin{tabular}{c|ccc}
\toprule
\textbf{State Space Option} &  Goal-Hazard & Push-Hazard\\
\hline
Accelerometer ($\mathbb{R}^3$) & \Checkmark & \Checkmark\\
Gyroscope ($\mathbb{R}^3$) & \Checkmark & \Checkmark\\
Magnetometer ($\mathbb{R}^3$) & \Checkmark & \Checkmark\\
Velocimeter ($\mathbb{R}^{3}$) & \Checkmark & \Checkmark\\
Goal Lidar ($\mathbb{R}^{16}$) & \Checkmark & \Checkmark\\
Hazard Lidar ($\mathbb{R}^{16}$) & \Checkmark & \Checkmark\\
Box Lidar ($\mathbb{R}^{16}$) & \XSolid & \Checkmark\\
\bottomrule
\end{tabular}
\end{center}
\caption{The state space components of different test suites environments.}
\label{tab:state_space}
\end{table*}


\subsection{Hyper-paramters}
The hyper-parameters used for evaluation in Safety Gym are reported in \Cref{tab:policy_setting}.

\begin{table}
\vskip 0.15in
\centering\footnotesize
\begin{tabular}{c|cccc}
\toprule
\textbf{Policy Parameter} & PPO & PPO-Lagrangian & CPO & PPO-SL \& PPO-UAISSA\\
\hline
Timesteps per iteration & 30000 & 30000 & 30000 & 30000 \\
Policy network hidden layers & (256, 256) & (256, 256) & (256, 256) & (256, 256) \\
Value network hidden layers & (256, 256) & (256, 256) & (256, 256) & (256, 256) \\
Policy learning rate & 0.0004 & 0.0004 & (N/A) & 0.0004 \\
Value learning rate & 0.001 & 0.001 & 0.001 & 0.001 \\
Target KL & 0.01 & 0.01 & 0.01 & 0.01 \\
Discounted factor $\gamma$ & 0.99 & 0.99 & 0.99 & 0.99 \\
Advantage discounted factor $\lambda$ & 0.97 & 0.97& 0.97 & 0.97 \\
PPO Clipping $\epsilon$ & 0.2 & 0.2  & (N/A) & 0.2 \\
TRPO Conjugate gradient damping & (N/A) & (N/A) & 0.1 & (N/A) \\
TRPO Backtracking steps & (N/A) & (N/A) & 10 & (N/A) \\
Cost limit & (N/A) & 0 & 0 & (N/A) \\
\bottomrule
\end{tabular}
\caption{Important hyper-parameters of PPO, PPO-Lagrangian, CPO, PPO-SL and PPO-UAISSA}
\label{tab:policy_setting}
\end{table}

\subsection{Ablation Study for Robot Arm}
\label{sec: alation study}
The ablation study on different discretization gap $\tau$. The results are shown in \Cref{fig:different_tau_robotarm}, where the gap between the upper bound of the safety index and the safety index decreases with smaller discretization gaps.

\begin{figure}[h]
    \centering
     \subfigure[$\frac{3}{8}\tau$, GP time = $1.35$ ms]{\includegraphics[width=0.45\columnwidth]{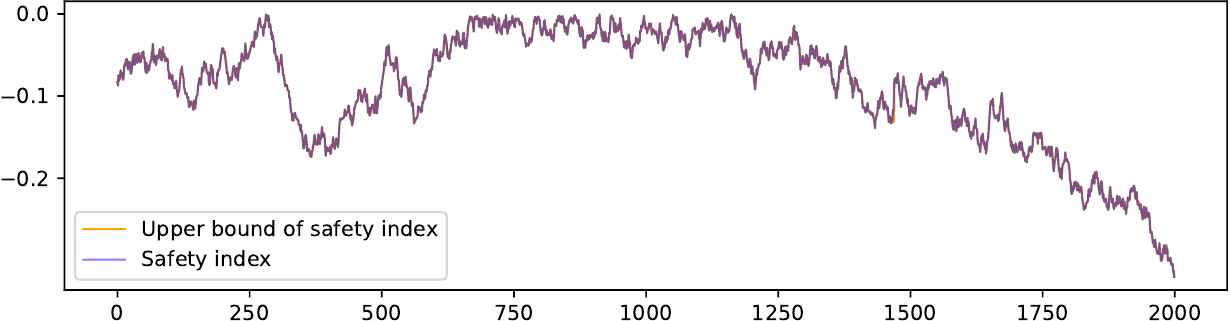}}
     \hfill
     \subfigure[$\frac{4}{8}\tau$, GP time = $0.527$ ms]{\includegraphics[width=0.45\columnwidth]{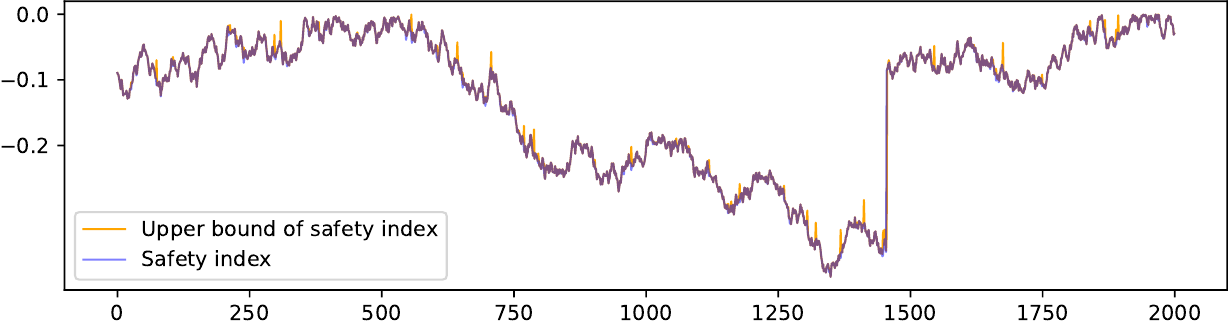}}
     \newline
     \subfigure[$\frac{7}{8}\tau$, GP time = $0.404$ ms]{\includegraphics[width=0.45\columnwidth]{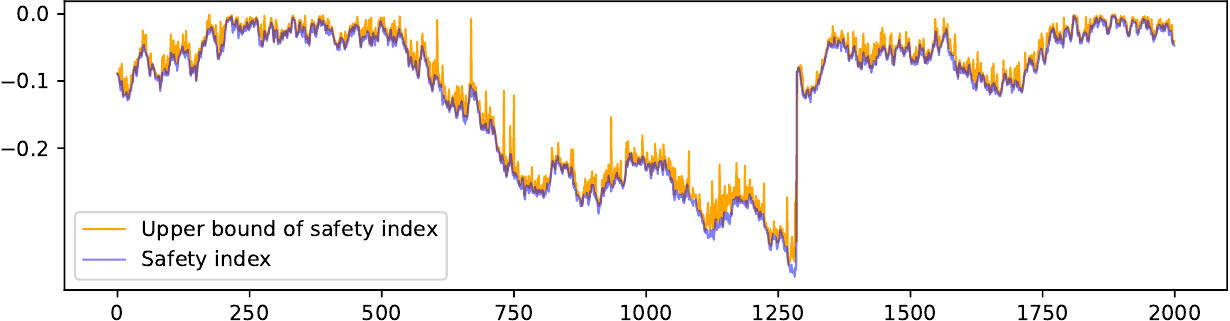}}
     \hfill    
     \subfigure[$\tau$, GP time = $0.367$ ms]{\includegraphics[width=0.45\columnwidth]{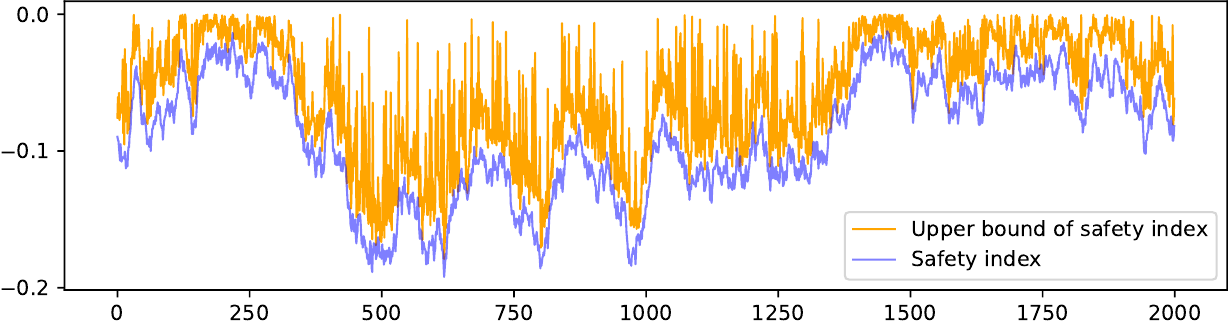}}
     \centering
    \caption{Evolutions of safety index and its upper bound with UAISSA with different discretization gap.}
    \label{fig:different_tau_robotarm}
\end{figure}

\subsection{Additional Experimental Details}
\label{sec:metrics}
\subsubsection{Offline Safety Index Synthesis and GP Dataset Construction}
In this subsection, we show the method to construct GP dynamics learning dataset and safety index synthesis in the high dimensional environments, i.e. SafetyGym. 

\paragraph{Alternative Upper Bound Posterior Variance for Safety Index Design}
In our experiments, we use deep GP to learn the system dynamics, which doesn't require the kernel functions. Hence, the kernel Lipschitz constant $L_k$ and posterior $K$ matrix is inaccessible, indicating \eqref{eq: the choice of tau} and \eqref{eq: the choice of safety index} cannot be evaluated. 

Note that the term $n_x \sqrt{2L_k \tau_x +  2|\mathcal{X}_{\tau_x}| L_k \tau_x \|K^{-1} \| \max_{w, w' \in \mathcal{W}}k(w, w')}$ is essentially the upper bound of posterior variance based on the collected data. To solve the aforementioned challenge, we leverage an alternative posterior variance upper bound theory~\citep{lederer2021uniformvariance}, which provides the worst case posterior variance upper bound with discretized input space
\begin{align}
    \tilde{\sigma}_f = \sigma_{rbf}^2 - \sigma_{rbf}^2 \exp\Big(-\frac{\tau_x^2}{2l^2}\Big)^2
\end{align}
where $\sigma_{rbf}$ and $l$ are the variance and lengthscale of RBF kernel, i.e. the prior for the GP.

Then according to \eqref{feasible_condition_for_k_when_k>=1}, the alternative $\tau_x$ condition for \eqref{eq: the choice of tau} becomes
\begin{align}
\label{eq: alternative tau}
& \; \inf_{x}\sup_{u}\Delta\dot{d} - (L_{d_x} + L_{\dot{d}_x})\tau_x - (L_{d_x} + L_{\dot{d}_x}) L_f \tau_x \\ \nonumber 
& -2(L_{d_x} + L_{\dot{d}_x})\beta_f n_x \bigg(\sigma_{rbf}^2 - \sigma_{rbf}^2 \exp\Big(-\frac{\tau_x^2}{2l^2}\Big)^2\bigg) > 0.
\end{align}

With the Lipsthiz constants and well calibrated model expression for $\beta_f$, we can select a proper $\tau_x$ satisfying \eqref{eq: alternative tau}. Subsequently, the safety index design in \eqref{eq: the choice of safety index} can also be evaluated.

\paragraph{Deep GP Dataset} We construct the dataset for deep GP by training a standard PPO agent and collecting the first 1e6 data samples (2.5 hours of running time) as the dataset. Based on this dataset, we derive the safety index parameters via evluating \eqref{eq: the choice of safety index} across  all the collected samples. Note that instead of grid sampling, our safety index design are based on data samples from small portion of the whole state space, which is less restrictive compared with \Cref{theorem:main}. And our experimental results show that this practical implementation empirically achieves near zero-violation performance.

\subsubsection{Metrics Comparison}
In this subsection, we report all the results of eight test suites by two metrics defined in Safety Gym \citep{ray2019benchmarking}:
\begin{itemize}
    \item The average episode return $J_r$.
    \item The average cost over the entirety of training $\rho_c$.
\end{itemize}
The average episode return $J_r$ and the average episodic sum of costs $M_c$ were obtained by averaging over the last five epochs of training to reduce noise. Cost rate $\rho_c$ was just taken from the final epoch. We report the results of these three metrics in \Cref{tab:metrics} normalized by PPO results.

\begin{table*}[t]
\setlength\tabcolsep{4pt}
\centering
  \subfigure[Goal-Hazard1-0.05]{%
    \begin{tabular}{c|cc}
    \toprule
    \textbf{Algorithm} & $\bar{J}_r$ & $\bar{\rho}_c$\\
    \hline
    PPO & 1.000 & 1.00000\\
    PPO-Lagrangian & \textbf{1.079} & 0.79350\\
    CPO & 1.022  & 0.43588\\
    PPO-SL & 0.997 & 1.03483\\
    PPO-UAISSA (Ours) & 0.939 & \textbf{0.00965}\\
    \bottomrule
    \end{tabular}}
  \hfill%
  \subfigure[Push-Hazard-0.15]{%
    \begin{tabular}{c|cc}
    \toprule
    \textbf{Algorithm} & $\bar{J}_r$ & $\bar{\rho}_c$\\
    \hline
    PPO & 1.000 & 1.00000\\
    PPO-Lagrangian & 1.008  & 0.35270\\
    CPO & 0.990  & 0.25780 \\
    PPO-SL & \textbf{1.011} & 1.07102\\
    PPO-UAISSA (Ours) & 0.884 & \textbf{0.00060}\\
    \bottomrule
    \end{tabular}}
  \newline%
  \subfigure[Goal-Hazard-0.30]{%
    \begin{tabular}{c|cc}
    \toprule
    \textbf{Algorithm} & $\bar{J}_r$ & $\bar{\rho}_c$\\
    \hline
    PPO & 1.000 & 1.00000\\
    PPO-Lagrangian & 1.008  & 0.35270\\
    CPO & 0.990  & 0.25780 \\
    PPO-SL & \textbf{1.011} & 1.07102\\
    PPO-UAISSA (Ours) & 0.884 & \textbf{0.00060}\\
    \bottomrule
    \end{tabular}} 
  \hfill%
  \subfigure[Push-Hazard-0.30]{%
    \begin{tabular}{c|cc}
    \toprule
    \textbf{Algorithm} & $\bar{J}_r$ & $\bar{\rho}_c$\\
    \hline
    PPO & 1.000 & 1.00000\\
    PPO-Lagrangian & 0.849  & 0.26067\\
    CPO & \textbf{1.192}  & 0.14833 \\
    PPO-SL & 0.757 & 0.66262\\
    PPO-UAISSA (Ours) & 0.689 & \textbf{0.00034}\\
    \bottomrule
    \end{tabular}}
   \centering
\caption{Normalized metrics obtained from the policies at the end of the training process, which is averaged over four test suits environments and five random seeds.}
\label{tab:metrics}
\end{table*}

\end{document}